\newtheorem{theorem}{\textbf{Theorem}}
\newtheorem{lemma}{\textbf{Lemma}}
\newtheorem{definition}{\textbf{Definition}}
\pgfplotsset{compat=1.5}
\tikzset{
	LabelStyle/.style = { rectangle, rounded corners, draw,
		minimum width = 2em, fill = white!50,
		text = black, font = \bfseries },
	VertexStyle/.append style = { inner sep=1pt,
		font = \Large\bfseries},
	EdgeStyle/.append style = {->, ultra thick,gray!50} }
\begin{document}
%
\title{Real-Time Network-Level Traffic Signal Control: An Explicit Multiagent Coordination Method}
%
%
%

\author{Wanyuan ~Wang,
        Tianchi~Qiao,Jinming~Ma,Jiahui~Jin,Zhibin~Li,Weiwei~Wu,~and~Yichuan Jiang
\thanks{W. Wang, T. Qiao, J. Jin, W. Wu, and Y. Jiang are with the School of
Computer Science and Engineering, Southeast University, Nanjing, China
(e-mail: wywang@seu.edu.cn; tianchi-qiao@seu.edu.cn; jjin@seu.edu.cn;  weiweiwu@seu.edu.cn; yjiang@seu.edu.cn).}
\thanks{Z. Li is with the School of Transportation, Southeast University, Nanjing, China (e-mail:lizhibin@seu.edu.cn).}
\thanks{J. Ma is with the School of Computer Science and Technology, University of Science and Technology of China, Hefei, China (e-mail:jinmingm@mail.ustc.edu.cn).}
\thanks{Manuscript received Sep. 25, 2022.}
}


%
%

\markboth{IEEE TRANSACTIONS ON Mobile Computing,~VOL.~1, No.~8, September~2022}%
{Shell \MakeLowercase{\textit{Wang et al.}}: Real-Time Network-Level Traffic Signal Control: An Explicit Multiagent Coordination Approach}
%



\IEEEtitleabstractindextext{%
	\begin{abstract}
		Efficient traffic signal control (TSC) has been one of the most useful ways for reducing urban road congestion. Key to the challenge of TSC includes 1) the essential of real-time signal decision, 2) the complexity in traffic dynamics, and 3) the network-level coordination. Recent efforts that applied reinforcement learning (RL) methods can query policies by mapping the traffic state to the signal decision in real-time, however, is inadequate for unexpected traffic flows. By observing real traffic information, online planning methods can compute the signal decisions in a responsive manner. Unfortunately, existing online planning methods such as model-predict control and schedule driven heuristics, either require high computation complexity or get stuck in local coordination, and suffer from scalability and efficiency issues. Against this background, we propose an explicit multiagent coordination (EMC)-based online planning methods that can satisfy adaptive, real-time and network-level TSC. By multiagent, we model each intersection as an autonomous agent, and the coordination efficiency is modeled by a cost (i.e., congestion index) function between neighbor intersections. By network-level coordination, each agent exchanges messages with respect to cost function with its neighbors in a fully decentralized manner. By real-time, the message passing procedure can interrupt at any time when the real time limit is reached and agents select the optimal signal decisions according to the current message. Moreover, we prove our EMC method can guarantee network stability by borrowing ideas from transportation domain. Finally, we test our EMC method in both synthetic and real road network datasets. Experimental results are encouraging: compared to RL and conventional transportation baselines, our EMC method performs reasonably well in terms of adapting to real-time traffic dynamics, minimizing vehicle travel time and scalability to city-scale road networks.
	\end{abstract}
	
	\begin{IEEEkeywords}
	Traffic Signal Control, Multiagent Coordination, Distributed Constraint Optimization Problem, Message Passing
\end{IEEEkeywords}}

\maketitle


%
\IEEEpeerreviewmaketitle

\section{Introduction}
%
%
%
%
\IEEEPARstart{N}owadays, traffic signal control (TSC) is still dominated by the use of fixed timing rules such as SCATS \cite{Lowrie1990SCATSSC} and SCOOT \cite{Robertson91}, which can be pre-defined by domain experts. However, with the ever-increasing number of vehicles traveling on urban road networks, these pre-defined TSC rules have become incomplete and outdated, thereby traffic congestion are becoming the key inconvenience to drivers. According to the INRIX Global Traffic Scorecard \cite{Trafficstatics}, in 2019, an average driver in the U.S. lost 99 hours a year due to congestion, costing them nearly \$88 billion. Intelligent TSC has the potential to reduce traffic congestion, energy consumption and CO$_{2}$ emissions \cite{VallatiMSCM16,8824092,Iwase0GC22}. It is commonly recognized that modern intelligent TSC should be real-time responsive to dynamic traffic flows and can scale-up to network-level scenarios \cite{SmithBXR13}.

To dynamically adjust TSC decisions according to real traffic, reinforcement learning (RL) methods can query the learned policy by mapping real traffic information to the signal decisions \cite{iet-its.2017.0153,WeiZYL18,ZhangYZ20,ChenWXZYXXL20,WeiZGL20}. Multiagent RL (MARL) focuses on coordinating traffic signals, where the individual policy can be regulated by the global joint state-action critics \cite{Chu2020,Ma2020}, and temporal and spatial influences can be learned by deep networks \cite{WeiXZZZC0ZXL19,YuLWJHC0020,JiangQS0Z22,WangXNTCX22}. To scale-up, hierarchical RL can be applied by decomposing the large-scale road networks into sub-networks, each sub-network controls its signals in the low-level, and the centralized upper-level constrains and evaluates the sub-network policy \cite{Pol2016,Tan2020,ZhangYZ20,XuWWJL21}. Unfortunately, RL based approaches have a couple of issues that have prevented deployment on real-world TSC: 1) the deep network-based implicit coordination requires careful implementation and tedious parameter-tuning, and 2) MARL is trained offline for regular traffic flows, but is not able to deal with unexpected situations such as emergency events and disruptions \cite{SmithBXR13,7782850,HuS19}.


\begin{table*}[t!]
	\centering
	\begin{tabular}{c|c|c|c|c|c}
		\hline\hline
		\multicolumn{2}{c|} {\multirow{2}* {\backslashbox{Category}{Property}}}  & Response Time  & Coordination  & Parameter & Scalability \\
		\multicolumn{2}{c|}{} & (seconds)  &Mechanism & (Based/Free) & {\# of Intersections}  \\
		\hline\hline
			\multirow{3}* {Offline RL} & Independent \cite{Pol2016,WeiZYL18,Hua2019,ChenWXZYXXL20} & \multirow {3}* {$\leq$3}  & None & \multirow {3}* {Based} & $\sim$ 2500 \\
			\cline{2-2}\cline{4-4}\cline{6-6}
		 & Coordinated \cite{KuyerWBV08,El-TantawyAA13,WeiXZZZC0ZXL19,YuLWJHC0020,WangXNTCX22} &  & Local &  & $\leq$ 200 \\
		 \cline{2-2}\cline{4-4}\cline{6-6}
		& Hierarchical \cite{Tan2020,ZhangYZ20,Ma2020} &  & Global &  & $\leq$36 \\
		\hline \hline
		
		\multirow{5}* {Online Planning}
		& Max Pressure \cite{VARAIYA2013177} & $\leq$3 & None & Free & $\sim$2500 \\
		\cline{2-6}
		& Schedule-Driven \cite{XieSB12,SmithBXR13,HuS17,GoldsteinS18,HuS19,Wuna2020} & $\leq$3 & Global & Based & $\leq$25 \\
		\cline{2-6}
		& Centralized MPC \cite{Cai09,7637098,ABOUDOLAS2010680,LinSXH11} & $\geq$3 & Global & Free & $\leq$15 \\
		\cline{2-6}
		& Decentralized MPC \cite{DEOLIVEIRA2010,Zhou17} & $\geq$80 & Global & Free & $\leq$34 \\
		\cline{2-6}
		& Our EMC Approach & $\leq$3 & Global & Free & $\sim$ 400 \\
		\hline\hline
	\end{tabular}
	\caption{Summary of the related TSC methods on response time, coordination mechanism, parameters-free/based, and the scalability. Parameter-based: in offline RL, a set of hyper-parameters are necessary for tuning deep networks and schedule-driven is sensitive to the fine-grained arrival time of each vehicle.}\label{Tab:RelatedWork}
\end{table*} 

Online planning methods attempt to search the signal decision for current traffic flows in real-time and is capable of responsiveness for irregular traffic flows. For example, by building a traffic prediction model, model-predictive control (MPC) can plan a sequence of signal actions for the real traffic \cite{8707103}. Unfortunately, MPC needs a complex mathematical program to optimize multiple intersections' signal sequences \cite{ABOUDOLAS2010680,LinSXH11,Zhou17,7328307,DEOLIVEIRA2010,Wuna2020}, which is inherently susceptible to scalability issues. A recent development in decentralized online planning that overcomes this scalability issue is schedule-driven methods \cite{SmithBXR13,AhmadMY17}. The key idea behind this method is to formulate TSC problem at each intersection as a machine scheduling problem, where jobs are represented by clusters of spatially adjacent vehicles \cite{XieSB12}. Scalability is ensured by allowing each intersection to use the polynomial schedule algorithm to search for its own signal plan \cite{HuS17,GoldsteinS18}. Coordination is achieved by communicating outflows from upstream intersections to downstream intersections \cite{HuS19}. The success of the schedule-driven approach depends on the fine-grained sensing information (i.e., the arrival time) of each vehicle, which is hard to achieve and error-prone in real-world complex traffic dynamics \cite{WeiZGL20}.

To satisfy adaptive, real-time and network-level coordinated TSC, this paper proposes an explicit multiagent coordination (EMC) approach, which models each intersection as an autonomous and cooperative agent. By  \emph{explicit}, we propose a \emph{balance index} to explicitly characterize the coordination efficiency among neighbor intersections. The proposed coordination function can fundamentally align local coordination efficiency with global object of minimizing average travel time of vehicles.  Moreover, EMC only need to access the number of vehicles waiting at the intersections, which is easier to achieve than the fine-grained arrival time of each vehicle required in those schedule-driven online planning methods \cite{VARAIYA2013177,Hua2019}. Indeed, multiagent coordination concept has been considered before for TSC, but in a way that models coordination function as a simple linear weighting of independent agent's reward \cite{Bazzan09,JungesB08,KuyerWBV08,Pol2016} or also models vehicle agents \cite{Iwase0GC22}. In contrast, the proposed EMC only models intersection agents, which can scale to network-level applications.

In summary, our first contribution of this paper is to propose a novel EMC framework to model TSC problem, where a coordination function is designed to regulate signal decisions of neighboring interactions. The second contribution is to propose a decentralized and anytime message-passing algorithm, which can optimize the network-level coordination in a real-time manner. By decentralization, each agent can only exchange messages of coordination function with neighbors. By anytime, the message passing procedure can interrupt at any time when the real time limit is reached and each agent selects the optimal signal decisions by summarizing the current message. Finally, we conduct comprehensive experiments using both synthetic and real road network datasets. Experimental results demonstrate the power of EMC method over traditional transportation heuristics and RL methods with respect to reduce average travel time of vehicles, especially in city-scale scenarios with hundreds of intersections.

The remainder of this paper is organized as follows. In Section 2, we give a brief review of related work on intelligent TSC. In Section 3, we describe the TSC problem and formulate the TSC as a coordination graph-based EMC problem in Section 4. We propose an efficient decentralized and anytime message-passing algorithm in Section 5. In Section 6, we conduct a set of experiments to evaluate our proposed EMC's performance. Finally, we conclude our paper and discuss future work in Section 7.

\section{Related Work}
To address the challenges of dynamic traffic flow, real-time signal control and network level coordination, there have been multiple threads of research dedicated to TSC, including offline multiagent reinforcement learning, and online planning researches such as model-predictive control and conventional transportation methods. In the following, we review and discuss how related researches concern on these two domains. Table \ref{Tab:RelatedWork} gives a brief comparisons of these researches.

\subsection{Offline Reinforcement Learning (Offline RL)}
\textbf{Independent RL.} Because of the highly dynamic traffic flows, reinforcement learning (RL), which can directly learn how to take actions in complex and uncertain environment, has recently been investigated for TSC \cite{6287438}. In RL, each intersection/agent interacts with the traffic environment and learns from rewards to achieve the mapping between the traffic state and the corresponding signal decisions. Thus, careful reward design is essential to minimize network travel time of vehicles \cite{3398777}. Wei et al. \cite{WeiZYL18} weighting multiple factors such as queue length, delay, waiting time, traffic flow and vehicle speed as the immediate reward. Inspired by the max-pressure theory proposed in transportation literature \cite{VARAIYA2013177}, Wei et al. \cite{Hua2019} further propose a "pressure" reward that has the network flow stability guarantee. In terms of scalability, independent RL transfers the single agent's plan to other agents, thereby can scale up to large-scale applications \cite{Bakker2010,ChenWXZYXXL20}. However, some intersections are tightly coupled, without coordination, congestion starts from the upstream intersections will spread to downstream intersections \cite{Bazzan09}.

\textbf{Multiagent RL.} MARL allows agents to communicate and cooperative with neighbor agents, has proposed to optimize network-level coordination. For example, Liu et al. \cite{LiuLC17} propose a distributed RL method by extending the observable surroundings of agents with neighbors' traffic information. Yu et al. \cite{YuLWJHC0020} further propose an active communication mechanism where the historical action and state information is actively communicating to neighbors. The temporal and spatial influences of other non-neighbor agents to the target agent can also be learned by the graph attention networks \cite{WeiXZZZC0ZXL19,WangXNTCX22}. To improve communication efficiency, the impact of signal decisions on its neighbors can be explicitly predicted \cite{JiangQS0Z22}. In order to coordinate policies in an explicit manner, a modular $Q$-learning framework is proposed to select the action that can maximize the sum $Q$ function of each pair of agents \cite{El-TantawyAA13},and Kuyer et al. \cite{KuyerWBV08} propose a factored $Q$-function to learn the coordination policy between a pair of connected agents. Using the centralized training and decentralized execution framework, Chu et al. \cite{Chu2020} propose an advantage actor critic (A2C) mechanism to improve the coordination.

\textbf{Hierarchical RL}. Exploiting the advantage of RL on local connected intersections, hierarchical RL can be applied by decomposing the network into sub-networks, where each sub-network controls its signals to maximize the global performance\cite{Tan2020}. For example, Pol and Oliehoek \cite{Pol2016} employ the max-plus mechanism to coordinate local component actions to get the network-level coordinated policy. The local policy can be better aligned with global centralized $Q$ function with hybrid loss function \cite{ZhangYZ20}. More recently, Ma and Wu \cite{Ma2020} propose a top-down feudal RL where the high-level manager set goals for their workers, while each lower-level worker controls traffic signals to fulfill the managerial goals.

Unfortunately, offline RL approaches have multiple issues that have prevented deployment on real-world TSC: (1) tedious parameter-tuning and an infeasible amount of simulations are required to learn good policies for multiple intersections; (2) since the traffic environment will be non-stationary with multiple agents learning concurrently, deep MARL approaches for TSC are not stable and may trap into sub-optimal solution; (3) learning-based TSC methods can be trained efficiently for regular traffic flows, but difficult to apply in an online manner if traffic flows are changing irregularly (e.g.,unexpected incidents such as accidents and lane closure happen).

\subsection{Online Planning}
Online methods interleave planning and execution by focusing only on states that are reachable from the current state, thus is capable of real time responsiveness for irregular traffic. Max pressure control is an greedy online planning mechanism, where each intersection independently selects to activate the next signal with the maximal pressure (i.e., the difference of queue lengths between upstream and downstream roads) \cite{VARAIYA2013177}. Such \emph{pressure} concept has been used for reward shaping in RL \cite{Hua2019}. While max-pressure is simple to implement, the lack of coordination limits its efficiency in network-level TSC applications.

\textbf{Schedule-Driven Method.} A recent development in online planning that overcomes this coordination problem is schedule-driven TSC \cite{SmithBXR13}. By modeling a cluster of vehicles as indivisible jobs and intersections as machines, the TSC can be formulated as schedule problems where jobs need to be served within the minimum delay \cite{XieSB12}. The downstream intersection can coordinate with the upstream neighbors by receiving and responding outflows from its upstream neighbors. Hu and Smith \cite{HuS19} further extend the basic single direction coordination mechanism to a bi-directional coordination mechanism. Considering the impact of the vehicle queue length on traveling delay, a weight is assigned to each road. For example, Hu and Smith \cite{HuS17} use the link softpressure as the weight and Wu et al. \cite{Wuna2020} use the occupancy ratio as the weight. Given the weight of each road, the signal decision should optimize the weighted cumulative delay. To tackle with the uncertainty associated with the vehicle turn movements at intersections,a sample-based constrained optimization is proposed to minimize expected delay over all vehicles \cite{DhamijaGV020}.

Despite its scalability and effectiveness, key to schedule-driven approaches is an ability to
sense parameters of clusters $(|c|,arr,dep)$, where $c$ is the cluster of vehicles, $arr$ (resp. $dep$) is the expected arrival (resp. departure) time of the first vehicle in the cluster $c$. However, current point sensors can only provide an inexact estimate of arrival times \cite{GoldsteinS18}, where the scheduling efficiency cannot be guaranteed \cite{8443134}. In contrast, our proposed EMC approach only requires the expected number of vehicles arriving at next period, which can be predicted accurately using historical data \cite{LvDKLW15,3511976}.

\textbf{Model-Predictive Control (MPC).}  Equipped with a prediction model describing the traffic dynamics in a finite-time horizon, MPC approaches focus on the online long-term traffic optimization problem \cite{8707103}. By modeling the linear objective, such as minimizing the queue length of roads, an integer program can be applied to generate the real-time strategies \cite{7637098}. In order to optimize the quadratic objective, such as the balance of queue lengths, quadratic-program can be used to achieve the exact solution \cite{ABOUDOLAS2010680}, and approximate dynamic programming is adopted to generate the approximations \cite{Cai09}. On the other hand, due to the nonlinearity of the prediction model, the mixed-integer linear program (MILP) is used to reformulate the MPC problem \cite{LinSXH11}. However, these centralized MPC solutions to traffic regulation require high computational time, and create a single point of failure \cite{Ash18}.  For example, Heung et al. \cite{HeungHF05} propose a centralized coordination mechanisms for dynamically adjusting offsets, which are inherently susceptible to scalability issues with tens of intersections. To scale up, Oliveira et al. \cite{DEOLIVEIRA2010} and Zhou et al. \cite{Zhou17} decompose the centralized MILP  into separable formulations, each can be computed by an independent agent in parallel. For the arterial road with limited number of intersections, a two-way signal coordination method is proposed to achieve bandwidth criterion \cite{7328307}.

However, as the situation current stands, MPC approaches are widely believed not to be able to scale to network-level problems (i.e., take 80 seconds for 34 intersections \cite{Zhou17}). The reason for the limited scalability of MPC is the so-called curse of history: the model prediction behaves like breadth-first search in the combination action space. Compared with MPC researches, in which the centralized control is necessary, we propose a fully decentralized control anytime algorithm that gradually improves the quality of a solution as it runs and return a solution that is not worse than the previous iteration, thereby can scale to network-level applications.

\textbf{Traditional Multiagent Method.} In order to understand how the distributed constraint optimization (DCOP) performs in dynamic changing environments, Jungers and Bazzan \cite{JungesB08} first evaluates DCOP in real-world TSC problems. Kuyer et al. \cite{KuyerWBV08} and Pol and Oliehoek \cite{Pol2016} further combine DCOP and RL to optimize the long-term TSC efficiency. However,the above multiagent methods design a simple linear weighting of heuristic measures (e.g., queue length, delay, waiting time, traffic flow and vehicle speed) as the coordination function between agents, which results in highly sensitive performance \cite{Hua2019}. In contrast, this paper proposes a novel balance index-based coordination function, which can optimize the ultimate objective of minimizing average travel time of vehicles directly. Recently, Iwase et al. \cite{Iwase0GC22} propose to model vehicle agents to control the allocation of slots for a particular vehicle, but only scale-up to small settings with $~$100 vehicles.

\renewcommand{\multirowsetup}{\centering}
\begin{table}[t!]
	\vspace{10pt}
	\centering
	\begin{tabular}{c|c}
		\hline\hline
		Notation  &Description\\  
		\hline\hline
		$\mathcal{N}=\{1,\ldots,i,\ldots,n\}$ & the set of intersections  \\
		$\mathcal{N}_{B}\subseteq \mathcal{N}$  & the set of boundary intersections  \\
		$\mathcal{L}_{all}=\mathcal{L}\cup \mathcal{L}_{entry} \cup \mathcal{L}_{exist}$ & the set of links/roads\\
		$Neg(i)$ & the neighbors of the intersection $i$ \\
		$I(i)$ & the set of links entering $i$\\
		$O(i)$ & the set of links leaving $i$\\
		$l_{ij}$ & the link leaving from $i$ and entering into $j$ \\
		$Do_{l}$ & the set of downstream links of $l$ \\
		$Up_{l}$ & the set of upstream links of $l$ \\
		$(i,h)$ & traffic movement from link $l$ to link $k$\\
		$x(l,h)\in\{0,1\}$ & whether the phase $(l,h)$ is actuated  \\
		$q(l,h)$ & the queue length of phase $(l,h)$ \\
		$f(l,h)$ & the saturation flow of phase $(l,h)$ \\
		$r(l,h)\in[0,1]$ & the proportion of vehicles from $l$ to $h$   \\
		$d(l)$  & the exogenous flow of entry link $l$ \\
		$Q(t)$ & the queue state of road network  \\
		$B(t)$ & the balance index of road network  \\
		\hline\hline
	\end{tabular}
	\vspace{-3pt}
	\caption{Notation Overview} \label{tab:NO}
	\vspace{-15pt}
\end{table}

\section{The TSC Problem Description}
This section presents the TSC problem including the concepts of road network, movement phases, network state update, and the balance index as well as the object of TSC. Table \ref{tab:NO} shows the notations used throughout this paper.

\textbf{Road network.}  Let $G=\langle \mathcal{N}, \mathcal{L}_{all} \rangle$ denote the road network, where $\mathcal{N}=\{1,2,\cdots,n\}$ indicates a set of $n$ signalized intersections, and $\mathcal{L}_{all}=\mathcal{L}\cup \mathcal{L}_{entry} \cup \mathcal{L}_{exist}$ indicates a set of directed links (i.e., road) in the network. There are three types of links: an internal link $l \in \mathcal{L}$ goes from its start intersection $i \in \mathcal{N}$ to its end intersection $j \in \mathcal{N}$; an entry link $l\in \mathcal{L}_{entry}$ has no start intersection; and an exit link $l\in \mathcal{L}_{exist}$ has no end intersection.  An intersection $j$ is a neighbor of the intersection $i$, if there exists a link $l$ whose start (end) and end (start) intersections are $i$ and $j$ respectively. Let $Neg(i)$ denote the neighbors of the intersection $i$. Let $I(i)$ and $O(i)$ denote the set of links entering into (i.e., input) and leaving from (output) the intersection $i$, respectively, and $l_{ij}=I(j)\cap O(i)$ indicates the link leaving from $j$ and entering into $i$. Denoted by the intersection $i$ the boundary intersection if its input links include entry links, i.e., $I(i)\cap \mathcal{L}_{entry} \neq \emptyset$, and by $\mathcal{N}_{B}\subseteq \mathcal{N}$ the set of boundary intersections. Link $l$ is upstream (downstream) of link $h$ if $l$ has an end (start) intersection $i$ and $h$ has an start (end) intersection $i$. Let $Do_{l}$ and $Up_{l}$ denote the the set of downstream and upstream links of the link $l$, respectively.

\textbf{Movement phase.}  A \emph{traffic movement} $(l,h)$ is defined as the traffic traveling across an intersection $i$ from input link $l\in I(i)$ to output link $h\in O(i)$. The set of movement phases at an intersection $i$ is $\{(l,h)\in I(i)\times O(i)\}$. Conflict-free phases at intersection $i$ can be simultaneously activated. The signal control at the intersection $i$ can be represented by $\mathcal{X}_{i}=\{x_{i}(l,h)\in \{0,1\}, l\in I(i), h\in O(i)\}$, where $x_{i}(l,h)=1$ indicates the green light is activated and the traffic movement $(l,h)$ is allowed, and $x_{i}(l,h)=0$ indicates the red light is activated and the traffic movement $(l,h)$ is prohibited. In a typical intersection as shown in Fig. \ref{fig:Framework}, there are 12 ($4\time 3$) traffic movements, which can be controlled by 4 \emph{movement phases}: \emph{WE-Straight} (going straight from West and East), \emph{SN-Straight} (going straight from South and North), \emph{WE-Left} (turning left from West and East), \emph{SN-Left} (turning left from South and North)\footnote{It is worth noting that 1) the "turn right" traffic movement signal can be always activated and 2) there might be different number of phases in the real world intersections and four phases model is not a must}.

\textbf{Network State Update.}  Time is discretized into periods, $t=1,2,\cdots,T$, each includes a fixed duration of $\tau$ seconds (e.g., 10 seconds). Let $q(l,h)$ denote queue length of movement $(l,h)$, i.e., the number of vehicles waiting to leave the link $l$ and enter the link $h$.  Let $f(l,h)$ denote the saturation flow of traffic movement $(l,h)$, i.e., if movement $(l,h)$ is activated at the beginning of the period $t$, there will be at most $f(l,h)$ vehicles traveling from link $l$ to link $h$ at the end of $t$. An inactivated phase serves zero flow of vehicles.  At the beginning of the period $t$, let $Q(t)=\{q(l,h)(t)\}$ denote the queue state of the road network, and $x(l,h)(t)$ denote whether the phase $(l,h)$ is active (i.e., $x(l,h)(t)=1$) or not (i.e., $x(l,h)(t)=0$). According to traffic dynamics \cite{VARAIYA2013177}, the queue state for internal link $l\in \mathcal{L}$ updates as :
\begin{align} \notag \label{Eq:Evo}
&q(l,h)(t+1)=q(l,h)(t)-\overbrace{f(l,h)(t)x(l,h)(t)\wedge q(l,h)(t)}^{outgoing~vehicles}  \\
&+\overbrace{\sum_{e\in Up_{l}}\big[f(e,l)(t)x(e,l)(t)\wedge q(e,l)(t)\big]r(l,h)(t+1)}^{incoming~~vehicles}.
\end{align}
where the function $x\wedge y=\min\{x,y\}$. The second term on the right in Eq.(\ref{Eq:Evo}) indicates that the queue length $x(l,h)(t)$ decreases by up to $f(l,h)(t)$ vehicles during $t$ if $x(l,h)(t)=1$; the third term indicates that up to $f(e,l)(t)$ vehicles will move from upstream queue $(e,l)$ during $t$ if $x(e,l)(t)=1$ and they will join queue $(l,h)$ with a probability of $r(l,h)(t+1)$ at next period $t+1$. This turning proportion parameter $r(l,h)(t+1)$ indicates the proportion of vehicles leaving from $l$ and entering to $h$, which are assumed available by prediction from the route navigation systems \cite{MIRCHANDANI01}.

\begin{figure}[!t]
	\centering
	\includegraphics[width=1.0\linewidth]{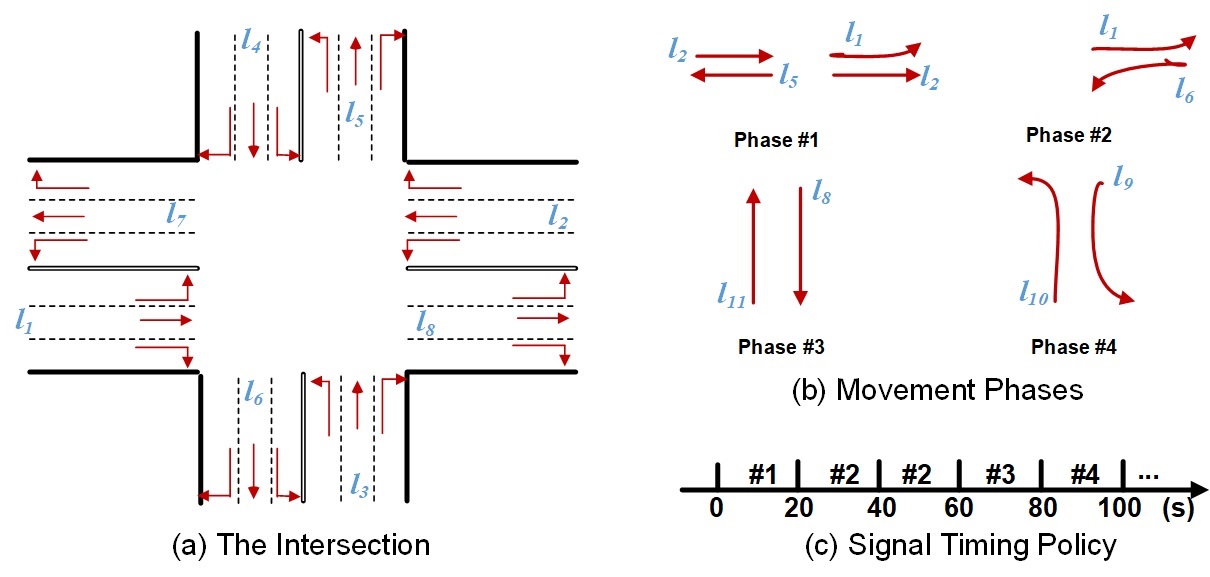}
	\caption{Intersection, movement phases and signal control policy.}
	\vspace{-15pt}
	\label{fig:Framework}
\end{figure}

The queue update equation for an entry link $l\in \mathcal{L}_{entry}$ is a bit different since it has no input links but exogenous arrivals:
\begin{align} \notag \label{Eq:Evo1}
q(l,h)(t+1)&=q(l,h)(t)-\overbrace{f(l,h)(t)x(l,h)(t)\wedge q(l,h)(t)}^{^{outgoing~vehicles}}\\
&+\overbrace{d(l)r(l,h)(t+1)}^{incoming~~vehicles}.
\end{align}
where $d(l)$ denote the exogenous flow of vehicles in entry link $l\in \mathcal{L}_{entry}$, which is also assumed available by estimation, and $r(l,h)(t+1)$ is the turning proposition.

\textbf{The Object.} The ultimate objective of TSC is to minimize the average travel time of vehicles. However, the travel time of vehicle is a long-term object depending on a sequence of signal actions, which cannot be computed until it has completed its route. Existing researches have investigated short-term measure such as a weighted combination of queue length, waiting time and delay \cite{El-TantawyAA13,Pol2016,WeiZYL18} or pressure \cite{VARAIYA2013177,HuS17,Hua2019} as a proxy for the average travel time. However, there is no guarantee that existing short-term measures can preserve the optimal policy. In contrast, this paper proposes a balance index that can align the short-term measure with the long-term travel time objective, defined as follows.
\begin{definition}
	\textbf{The Balance Index.} At period $t$, given the queue state, we utilize the \emph{balance index} $B_{i}(t)$ to characterize the traffic congestion at the intersection $i$, which is defined as the sum of squares of all its movements' queue length, i.e.,
	\begin{align} \label{Eq:pressure}
		B_{i}(t)=\sum_{(l,h):l\in I(i),h\in O(i)}[q(l,h)]^2.
	\end{align}
The network-level balance $B(t)$ can be defined as the sum of these intersections' balance, i.e., $B(t)=\sum_{i\in \mathcal{N}}B_{i}(t)$.
\end{definition}
The intuition behind the balance index is that given a number of vehicles queuing at an intersection, it will produce the less average waiting time in the case of these vehicles are distributed at links in a balance manner than that in the imbalance manner. Moreover, Fig.\ref{Fig:BalanceIndex} empirically shows the alignment of the balance index with the average travel time, i.e., the lower value of the network-level balance index $B(t)$, the less average travel time for vehicles. In the left of Fig.\ref{Fig:BalanceIndex}, we show the balance index of two methods on different datasets, i.e., SYN1, SYN2, HZ and JN (the detail description of these datasets can be seen in experiments), and the right of Fig.\ref{Fig:BalanceIndex} shows the average travel time of these methods. By comparison, the balance index can be used as a proxy measure for system objective (i.e., average travel time).

\begin{figure}[!t]
\begin{tikzpicture}
\begin{axis}[name=plot1,height=4cm,width=4.5cm,
legend style={at={(1.25,-0.5)},anchor=north,legend columns=-0.5},enlargelimits=0.1,
ybar=0pt, bar width=10,xtick align=inside, ymajorgrids, major grid style={dashed, line width=.5pt,draw=gray!50},
symbolic x coords={SYN1,SYN2,HZ,JN},
xtick=data,x label style={at={(0.5,-0.2)}},xlabel={Different Datasets},ylabel={Balance ($\times 10^3$)},y label style={at={(-0.18,0.5)}},ytick={0.5,1,1.5,2,2.5,3.0}]
\addplot[black!60,fill=gray!40]
coordinates{(SYN1,1.126)
	(SYN2,1.828)
	(HZ,0.841)
	(JN,2.154)
};
\addplot+[black!60, pattern=grid]
coordinates{(SYN1,1.236)
	(SYN2,2.058)
	(HZ,1.082)
	(JN,2.706)
};
\legend{Method1,Method2}
\end{axis}

\begin{axis}[name=plot2,at={($(plot1.east)+(3cm,1.2cm)$)},anchor=north,height=4cm,width=4.5cm,enlargelimits=0.1,
ybar=0pt, bar width=10, tick align=inside,ymajorgrids, major grid style={dashed, line width=.5pt,draw=gray!50},
symbolic x coords={SYN1,SYN2,HZ,JN},
xtick=data,x label style={at={(0.5,-0.2)}},xlabel={Different Datasets}, ylabel={Travel Time (s)},y label style={at={(-0.2,0.45)}}]
\addplot+[black!60,fill=gray!40]
coordinates{(SYN1,144)
	(SYN2,180)
	(HZ,345)
	(JN,291)
};
\addplot+[black!60, pattern=grid]
coordinates{(SYN1,155)
	(SYN2,196)
	(HZ,417)
	(JN,342)
};
\end{axis}
\end{tikzpicture}
\caption{The efficiency of balance index.\textbf{LEFT:} the balance index of methods; \textbf{RIGHT:} the average travel time of methods.}
\label{Fig:BalanceIndex}
\end{figure}
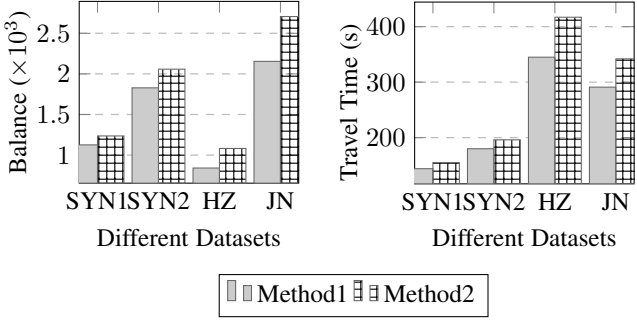%


 Finally, at period $t$, given the current traffic information, including network state $Q(t)$, exogenous demands $d_{l}(t)$, saturation flow $f(l,h)(t)$ and turning proportion $r(l,h)$, the optimal online TSC policy $\vec{x}^{*}(t)=\{x(l,h)^{*}(t)\}$ should be selected with the objective of minimizing the network-level balance at the next period $t+1$, i.e.,
 \begin{align} \label{Eq:objective}
 \vec{x}^{*}(t)=\arg\min_{\vec{x}(t)}B(t+1).
 \end{align}
where the queue state $Q(t+1)$ is updated by the policy $\vec{x}(t)$ on current network state $\{Q(t),f(l,h)(t),d(l)(t),r(l,h)\}$.

\begin{figure*}
	\centering
	\includegraphics[width=0.9\linewidth]{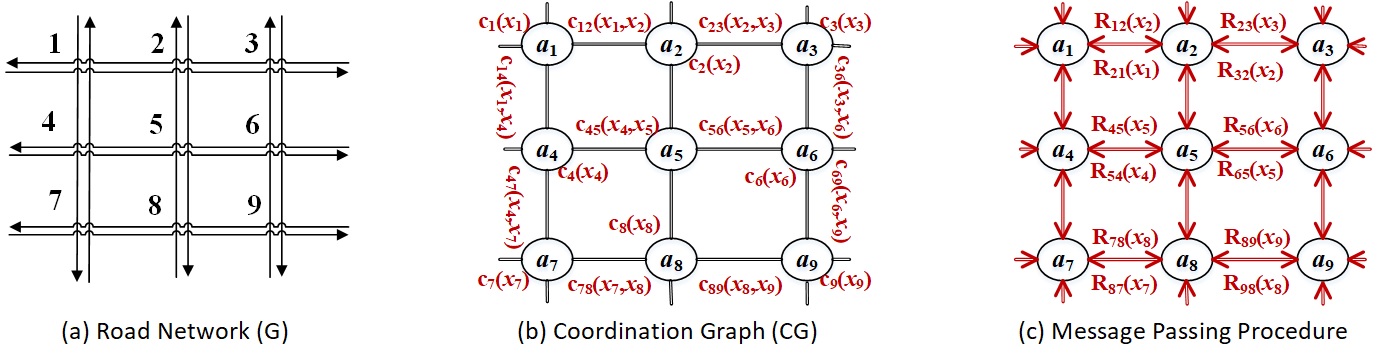}
	\caption{An example of CG formulation for TSC. \textbf{LEFT:} the road network, \textbf{MIDDLE:} the CG formulation. Each intersection $i$ is modeled as an autonomous agent $a_i$, the internal link balance between intersections $i$ and $j$ is modeled by the constraint cost $c_{ij}$, and the entry link balance is modeled by the individual agent cost $c_{i}$,\textbf{RIGHT:}The decentralized message passing procedure.}
	\vspace{-10pt}
	\label{fig:Factorgraph}
\end{figure*}
\section{Multiagent Coordination Model for TSC}
In this section, we design the congruent multiagent coordination graph to model the TSC problem, which will be proven to yield the optimal solution.


\textbf{Coordination Graph (CG).}  A CG is a 5-tuple $(\mathcal{A}, \mathcal{X}, \mathcal{D}, \mathcal{E}, \mathcal{C})$ such that:
\begin{itemize}
	\item $\mathcal{A}=\{a_1,a_2,\cdots,a_n\}$ is the set of agents, each agent has limited computation capacity that can compute and send messages.
	\item $\mathcal{X}=\{x_1,x_2,\cdots,x_n\}$ is the set of variables, each takes the value from the finite discrete domain $D_i \in \mathcal{D}$. Each variable $x_i$ is controlled by the corresponding agent $a_i$. Each agent can have an individual cost function $c_{i}(x_i) \in \mathcal{C}$, which only depends on its own variable $x_i$.
	\item $e_{ij}\in \mathcal{E}$ is the edge connects agents $a_i$ and $a_j$ if they interact with each other. The interaction between connected agents $a_i$ and $a_j$ is formulated by the local constraint cost function $c_{ij}(x_i,x_j) \in \mathcal{C}$, which is defined as a mapping from the assignments of the involved binary variables $(x_{i},x_{j})$ to a positive real, $c_{ij}: D_{i}\times D_{j} \rightarrow \mathbb{R}_{\geq 0}$.
\end{itemize}
Given a CG, the system cost for the global joint action $\vec{x}=\{x_i\}_{a_i\in \mathcal{A}}$ is factored as the sum of individual costs and local edge costs, i.e.,
\begin{align} \label{Eq:Globalcost}
C(\vec{x})=\sum_{a_i\in \mathcal{A}}c_{i}(x_i)+\sum_{e_{ij}\in \mathcal{E}}c_{ij}(x_i,x_j).
\end{align}
The objective of a CG is to find the best joint action, $\vec{x}^{*}$, that minimizes the system cost $C(\vec{x}^{*})$.

\textbf{The CG Formulation for TSC.} The key step toward modeling the TSC problem as a CG is mapping intersection, movement phase, and objective to the 5 tuples in the CG model. By capturing the structure of TSC, a straightforward CG formulation is modeling each agent $a_i$ in CG as an intersection $i$ in TSC\footnote{In practice, the intersection signal controller is equipped with computation unit that can control the traffic light signal \cite{Tora20}.}. Throughout this paper, we use agent and intersection interchangeable. Each agent $a_i$ holds a decision variable $x_i$ determining the movement phase to be activated. The domain of the variable $x_i$ is denoted by $D_{i}=\{\emph{WE-Straight},\emph{WE-Left},\emph{SN-Straight},=\emph{SN-Left}\}$, which corresponds to four movement phases. For two neighboring intersections $i$ and $j\in Neg(i)$, there will be an edge $e_{ij} \in \mathcal{E}$ between agents $a_i$ and $a_j$.

\textbf{Mapping Queue Balance to the Constraint Cost.} Given a particular variable assignment $\vec{x}$, one challenge is to construct constraint cost $\mathcal{C}$ such that when the resulting CG is solved, we obtain a solution that is congruent to the original TSC problem. Considering different kinds of links, e.g., entry link and internal link\footnote{In this paper, we assume that vehicles arrive their destination at the exit link, thus the vehicles on the exit link can be omitted}, we model two kinds of constraint cost,
\begin{itemize}
\item Mapping the balance of the internal link to the edge cost $c_{ij}$: given two neighboring agents $a_i$ and $a_j\in Neg(i)$, we define their local edge cost $c_{ij}$ as the sum balance of bi-direction links $l_{ij}$ and $l_{ji}$ between them, i.e.,
\begin{align}
c_{ij}(x_i,x_j)=\sum_{h\in O(i)}[q(l_{ij},h)]^2+\sum_{h\in O(j)}[q(l_{ji},h)]^2.
\end{align}
\item Mapping the balance of the entry link to the individual agent cost $c_{i}$: for the entry link $l\in \mathcal{L}_{entry}$, we map its link balance to individual cost of its downstream boundary intersection. For each boundary agent $a_i\in \mathcal{A}_{B}$, the individual agent cost $c_i$ is defined as sum queue balance of the input entry link, i.e.,
\begin{align}
c_{i}(x_i)=\sum_{l\in I(i)\cap \mathcal{L}_{entry},h\in O(i)}[q(l,h)]^2.
\end{align}
\end{itemize}
where $\mathcal{A}_{B}$ denote the set of boundary agents (i.e., boundary intersections $\mathcal{N}_{B}$). For any other non-boundary agent $a_i \notin \mathcal{A}_{B}$, the individual cost $c_i$ is set to be zero, i.e., $c_{i}(x_i)=0, \forall a_i \notin \mathcal{A}_{B}, x_i\in D_{i}$.

An example in Fig.\ref{fig:Factorgraph}(a) and Fig.\ref{fig:Factorgraph}(b) show the model. Now we will prove the equivalence of the constructed CG for TSC with respect to optimal solution.
\begin{lemma}
	The object function of the CG is equivalent to the object function of the original TSC.
\end{lemma}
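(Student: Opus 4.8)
The plan is to prove the stronger statement that the two objectives are not merely co-optimized but numerically identical: for every joint assignment $\vec{x}$, the CG system cost $C(\vec{x})$ of Eq.(\ref{Eq:Globalcost}) equals the network balance $B$ produced by executing $\vec{x}$ on the current state, so that in particular $\arg\min_{\vec{x}}C(\vec{x})=\arg\min_{\vec{x}}B(t+1)$ and the optimal CG assignment solves Eq.(\ref{Eq:objective}). The entire argument is a regrouping of the terms $[q(l,h)]^2$ constituting $B=\sum_{i\in\mathcal{N}}B_i$ according to which input link $l$ the movement $(l,h)$ uses.

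First I would record the bookkeeping fact underlying everything: in $B_i=\sum_{(l,h):l\in I(i),h\in O(i)}[q(l,h)]^2$ each movement $(l,h)$ is tied to the unique intersection $i$ that its input link $l$ enters, so every squared queue is counted exactly once across $B$. I then partition $I(i)=(I(i)\cap\mathcal{L})\cup(I(i)\cap\mathcal{L}_{entry})$ into internal and entry input links, splitting $B_i$ into an internal-link part and an entry-link part. The entry-link part is by definition exactly $c_i(x_i)$ when $a_i\in\mathcal{A}_{B}$ and vanishes otherwise, so summing over $i$ yields $\sum_{a_i\in\mathcal{A}}c_i(x_i)$.

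Next I would treat the internal-link part. Each internal input link of $i$ enters $i$ from some neighbor $j\in Neg(i)$, so $\sum_i\sum_{l\in I(i)\cap\mathcal{L},h\in O(i)}[q(l,h)]^2$ is a sum over ordered neighbor pairs $(i,j)$. Regrouping these ordered pairs into the undirected edges $e_{ij}\in\mathcal{E}$, the edge $\{i,j\}$ collects precisely the contribution of the link entering $i$, namely $\sum_{h\in O(i)}[q(l_{ij},h)]^2$, together with the contribution of the link entering $j$, namely $\sum_{h\in O(j)}[q(l_{ji},h)]^2$, which is exactly $c_{ij}(x_i,x_j)$. Summing over edges gives $\sum_{e_{ij}\in\mathcal{E}}c_{ij}(x_i,x_j)$, and combining with the entry part reconstitutes Eq.(\ref{Eq:Globalcost}), establishing $C(\vec{x})=B$.

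The step I expect to be the real obstacle is not the numerical regrouping but justifying that this decomposition is a legitimate CG, i.e., that each $c_{ij}$ depends only on $(x_i,x_j)$ and each $c_i$ only on $x_i$. For this I would invoke the update rules Eq.(\ref{Eq:Evo}) and Eq.(\ref{Eq:Evo1}): the resulting queue $q(l_{ij},h)(t+1)$ of the internal link entering $i$ from $j$ changes only through its outflow, controlled at $i$ by $x(l_{ij},h)$, and its inflow from the upstream links $Up_{l_{ij}}\subseteq I(j)$, controlled at $j$; since the single variable $x_i$ fixes every movement $x(l_{ij},\cdot)$ out of that link and $x_j$ fixes every upstream movement feeding it, the term involves no third agent, and an entry link's queue likewise depends only on $x_i$ and the exogenous $d(l)$. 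I would also state the standing assumption that at most one link runs in each direction between a pair of neighbors, so that $l_{ij}$, $l_{ji}$ and hence $c_{ij}$ are well defined and each internal link falls into exactly one edge; with parallel links one simply sums over them inside $c_{ij}$. Once this pairwise-dependence claim is secured, the regrouping is an exact identity and the equivalence of objectives follows.
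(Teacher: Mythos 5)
Your proof is correct and follows essentially the same route as the paper's: both establish the identity $C(\vec{x})=B$ by regrouping the squared queue lengths $[q(l,h)]^2$ according to whether the input link is an entry link (yielding the individual costs $c_i$) or an internal link (yielding the edge costs $c_{ij}$), and then re-summing over all links and intersections. Your additional verification that each $c_{ij}$ genuinely depends only on $(x_i,x_j)$ via the update rules Eq.(\ref{Eq:Evo})--(\ref{Eq:Evo1}) is a worthwhile point the paper leaves implicit, but it does not change the substance of the argument.
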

\begin{proof}
	Given any variable assignment $\vec{x}=(x_1,x_2,\cdots,x_n)$, the object function of the CG is
	\begin{align} \notag
	&\sum_{a_i\in \mathcal{A}_{B}}c_{i}(x_i)+\sum_{e_{ij}\in \mathcal{E}}c_{ij}(x_i,x_j) \\  \notag
	&=\sum_{l\in \mathcal{L}_{entry}, h\in Do(l)}[q(l,h)]^2+\sum_{l\in \mathcal{L}, h\in Do(l)}[q(l,h)]^2 \\ \notag
	&=\sum_{l\in \mathcal{L}_{all},h\in Do_{l}}[q(l,h)]^2=\sum_{i\in \mathcal{N}}\sum_{(l,h):l\in I(i),h\in O(i)}[q(l,h)]^2=B.
	\end{align}
\end{proof}

\section{The Online Multiagent Planning Method}
The main idea behind online planning method is that give the current network state $Q(t)$, each agent action selects an action $x_i$ that can optimize the next period network state $Q(t+1)$. Using the CG model, this section proposes a message passing-based online multiagent planning method for TSC, which is fully decentralized and real-time. By \emph{decentralization}, we mean that each agent can only receive message from and send message to neighbor agents. By \emph{real-time}, we mean that each agent can make its signal decision anytime by summarizing the received message when the time limit is reached.

The proposed online multiagent planning method consists of the following two stages:
\begin{itemize}
	\item \emph{network-level coordination} stage, in which agents coordinate their joint actions $\vec{x}^*=\{x_i^*\}_{a_i\in \mathcal{A}}$ to minimize the network balance $B(t+1)$ at the network period $t+1$;
	\item \emph{local individual improvement} stage, in which each agent $a_i$ improves the joint action $x_i^*$ to reduce its own queue balance $B_{i}(t+1)$.
\end{itemize}
\subsection{Network-Level Coordination of Minimizing $B(t+1)$} \label{Sec:GlobalOPT}
In message passing procedure, at each iteration, by summarizing the message received from neighbor agents, each agent sends the message that consists of the cost value with respect to the action variables to neighbors. The network-level coordination can be achieved by iteratively message passing between neighbor agents. Fig.\ref{fig:Factorgraph}(c) gives an illustration of the message passing procedure-based network level coordination.

\subsubsection{Transforming the CG to the directed acyclic graph (DAG)}
In order to avoid the message passing pathology caused by cycles in a CG, we first select an order on all agents, i.e., transforming a cyclic CG to a DAG for message passing.

Existing work \cite{ZivanP12} has constructed the DAG by ordering agents according to the indices of agents, that is an agent $a_i$ is ordered before agent $a_j$ if $i<j$. However, this kind of DAG has an arbitrary diameter, in which the message passing complexity can be further reduced. In this section, we transform a given CG to a DAG with the minimum diameter.

The main idea of constructing DAG is that we first determine a sink agent, and then determine the message passing order of agents according to their distance to the sink agent. A detailed DAG construction is shown in Algorithm \ref{Alg:MiniSO}, which consists of two phases.
\begin{itemize}
	\item Determine the sink agent $a_s$. From steps 2-5, we compute the diameter $dia(a_i)=\max_{a_j \in A}d(a_i,a_j)$ of the DAG if the agent $a_i$ is determined as the sink agent, where $d(a_i,a_j)$ denote the shortest distance between agents $a_i$ and $a_j$. The agent $a_s$ that has the shortest diameter $d(a_s)$ is recorded as the sink agent.
	\item Determine the message passing order. From steps 6-10, once the sink agent $a_s$ is selected, the message passing order can be selected from \emph{faraway} agents (i.e., these agents that are far away from the sink agent $a_s$) to \emph{nearby} agents (i.e., these agents that are close by the sink agent $a_s$) (steps 7-10). For these agents $a_i$ and $a_j$ that have the same distance to the sink agent $a_s$, the message passing order between $a_i$ and $a_j$ is arbitrarily selected.
\end{itemize}

It should be noted that Algorithm \ref{Alg:MiniSO} generates the minimum diameter DAG with a single sink agent.

\begin{algorithm} [!t]
	\SetKwInOut{Input}{Input}
	\SetKwInOut{Output}{Output}
	\Input{The coordination graph $CG$.}
	\Output{The message passing order $o$.}
	Initialize $dia=+\infty$, $o=\emptyset$;\\
	\For{$a_i\in \mathcal{A}$}
	{
		$dia(a_i)=\max_{a_j\in A}d(a_i,a_j)$;  \\
		\If{$dia(a_i)<dia$}
		{
			$dia=dia(a_i)$, $a_s=a_i$;  \tcp{sink agent determination}
		}
	}
	\For{$a_i,a_j\in \mathcal{A}:e_{ij}\in E$}
	{
		\eIf{$d(a_s,a_i)\leq d(a_s,a_j)$}{
			$o_{ij}=a_j\rightarrow a_i$;
		}{
			$o_{ij}=a_i\rightarrow a_j$;
		}
	}
	\caption{Minimum Diameter Directed Acyclic Graph (\textbf{DAG}($o$))}
	\label{Alg:MiniSO}
\end{algorithm}


\subsubsection{Message Passing-based Coordination}
To coordinate the global joint action, we would like to employ the Max-sum\_ADVP (Max-Sum through value propagation on an alternating DAG) algorithm, which has been recently proposed for distributed constraint optimization problem (DCOP) in multiagent domains \cite{1398426,ZivanP12,ChenDW17}.

\textbf{Remarks.} Despite the fact that our CG formulation of TSC is compatible with any complete or incomplete DCOP algorithm, we chose to use Max-Sum\_ADVP as it is one of the fastest and most efficient algorithms in many multiagent domains. It should be noted that the TSC is actually a minimization problem, however, we will continue to refer to it as Max-sum since this name is widely accepted. Compared to traditional Max-sum\_ADVP where the model is given, our main contribution is to taking the DCOP to the real-world TSC problem such as constraint cost and individual cost modeling, and propose a novel and efficient algorithm for TSC.

Given a DAG, let $Neg(i)=\{a_j|e_{ij}\in \mathcal{E}\}$ denote the neighbor agents of $a_i$, $Neg_{foll}(i)\subseteq Neg(i)$ denote the subset of $a_{i}$'s neighbor agents that are ordered after $a_i$, and $Neg_{prev}(i)\subseteq Neg(i)$ denote the subset of $a_{i}$'s neighbor agents that are ordered before $a_i$. During the message passing process, each agent $a_i$ iteratively collects messages from $Neg_{prev}(i)$ and sends messages to $Neg_{foll}(i)$. At each iteration, a message $R_{ij}$ sent from agent $a_i$ to the follow agent $a_j\in Neg_{foll}(i)$ is a scalar-valued function of the action space of receiving agent $a_j$,
\begin{align} \label{Eq:V2Fmessage}
R_{ij}(x_j)=\min_{x_i}\{c_{i}(x_i)+c_{ij}(x_i,x_j)+\sum_{a_k\in Neg_{prev}(i)}R_{ki}(x_i)\}.
\end{align}
This message for each possible value $x_j \in D_j$ can be explained as follows. When agent $a_j$ selects the action $x_j$, agent $a_i$ attempts to coordinate with $a_j$ by selecting his action $x_i$ to minimize the sum of costs incurred by its own cost $c_{i}$, the constraint cost $c_{ij}$ and that it receives from all previous agents $Neg_{prev}(i)$.

Agents exchange messages until convergence such that the received messages do not update. Finally, each agent decides its optimal action by summarizing the messages $R_{ji}$ it receives from its previous agents $a_j\in Neg_{prev}(i)$ in the last iteration. When an agent makes decisions, it accumulates all costs it receives from previous , and selects an action to minimize the sum costs. The action selection procedure for each agent $a_i$ can be formalized by
 \begin{align} \label{Eq:valuedecision}
 x_{i}^{*}=\arg\max_{x_i}\{c_{i}(x_i)+\sum_{j\in Neg_{prev}(i)}R_{ji}(x_i)\}.
 \end{align}

\newcommand{\pushline}{\Indp}
\newcommand{\popline}{\Indm}
\let\oldnl\nl
\newcommand{\nonl}{\renewcommand{\nl}{\let\nl\oldnl}}

\begin{algorithm} [!t]
	\SetKwInOut{Input}{Input}
	\SetKwInOut{Output}{Output}
	\Input{The message passing order $o$.}
	\Output{The joint action $\vec{x}^{*}=\{x_1^*,x_2^*,\cdots,x_n^{*}\}$.}
	\SetKwProg{Fn}{Function}{:}{}
	\While{time budget is remained}
	{
	perform \textsc{Message-Passing}($\langle o,dia(o) \rangle$); \\
	$x_s^{*}\leftarrow$ current optimal decision by Eq.(\ref{Eq:valuedecision});\\
	$o'$ $\leftarrow$ reverse($o$); \\
	perform \textsc{Message-Passing}($\langle o',dia(o') \rangle$); \\
    }	
	Compute the coordinated action $x_i^{*}$ by Eq.(\ref{Eq:valuedecision}); \\	
     \Fn{\textsc{Message-Passing}$(\langle o,dia(o) \rangle)$}
    {	\For{$dia(o)$ \emph{iterations}}
    	{
    	\For{$a_i\in \mathcal{A}$}
    	   {
    	    $Neg_{foll}(i)\leftarrow$ $\{a_j\in Neg(i):a_j$ is before $a_i$ in order $o$ $\}$; \\
         	$Neg_{prev}(i)\leftarrow Neg(i)\setminus Neg_{foll}(i)$; \\
    		Collect messages from $Neg_{prev}(i)$; \\
    		\For{$a_j\in Neg_{foll}(i)$}
    		{
    			send message $R_{ij}$ by Eq.(\ref{Eq:V2Fmessage}) to $a_j$;
    		}
    	  }
        }
    }
	\caption{Network-Level Coordination (\textbf{NL-Coor}$(o,dia(o))$)}
	\label{Alg:Max-Sum-ADE}
\end{algorithm}

We formally propose the network-level coordination (\textbf{NL-Coor}) algorithm in Algorithm \ref{Alg:Max-Sum-ADE}, which includes two phases of \textsc{Message-Passing} procedure i.e., message passing on the forward order $o$ (step 2) and on the reverse order $o' \leftarrow$ reverse($o$) (step 4). The function \textsc{Message-Passing}$(\langle o,dia(o) \rangle)$ presents a sketch of the message passing procedure. At each iteration, each agent $a_i$ first aggregates the message received from "previous" agents $Neg_{prev}(i)$ and sent messages to "follow" agents $Neg_{foll}(i)$ according to Eq.(\ref{Eq:V2Fmessage}). In step 7, the message passing process terminates after $dia(o)$ iterations, where $dia(o)$ denote the diameter of the message passing order $o$.  After \textsc{Message-Passing} converges in both directions or the time limit is reached (step 1), each agent selects its own decision $x_{i}^{*}$ according to Eq.(\ref{Eq:valuedecision}) (step 5).

\begin{lemma} \label{Lemma:Convergence}
	 Given the message passing order $o$, after $dia(o)$ iterations of message passing, the content of the message each agent $a_i$ receives does not change, i.e., the message passing process converges after $dia(o)$ iterations.
\end{lemma}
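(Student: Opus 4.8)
The plan is to argue by induction on a layering of the acyclic structure that the ordering $o$ imposes on the agents. First I would record the structural facts that make the argument possible: because $o$ is a total order and Algorithm~\ref{Alg:MiniSO} orients every edge $e_{ij}$ from its earlier endpoint to its later one, the induced graph is a DAG, so I can assign to each agent $a_i$ a \emph{level} $\ell(a_i)$ equal to the number of edges on the longest directed path ending at $a_i$. Source agents, those with $Neg_{prev}(i)=\emptyset$, have $\ell=0$, and any $a_k\in Neg_{prev}(i)$ satisfies $\ell(a_k)\le \ell(a_i)-1$ since the edge $a_k\to a_i$ extends any path ending at $a_k$. The key invariant I would prove is: \emph{after $t$ iterations, every outgoing message $R_{ij}$ sent by an agent $a_i$ with $\ell(a_i)<t$ has reached its final value and never changes again.}

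For the base case ($t=1$), a source agent computes $R_{ij}(x_j)=\min_{x_i}\{c_i(x_i)+c_{ij}(x_i,x_j)\}$ via Eq.(\ref{Eq:V2Fmessage}) with an empty $Neg_{prev}(i)$ sum, so this value depends only on the fixed cost tables and is stable from the first iteration. For the inductive step, assume the invariant holds after $t$ iterations and take $a_i$ with $\ell(a_i)=t$. Every message it consumes, namely $R_{ki}$ for $a_k\in Neg_{prev}(i)$, comes from an agent with $\ell(a_k)\le t-1<t$, hence is already final by hypothesis. Since the update in Eq.(\ref{Eq:V2Fmessage}) is a deterministic min-plus function of exactly those incoming messages and the fixed costs, the message $R_{ij}$ that $a_i$ emits at iteration $t+1$ takes its final value and remains frozen. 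This pushes the invariant up to level $t$ and closes the induction; note that no monotonicity or contraction property is needed, only the determinism of the update on a well-founded layering.

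Finally I would translate ``all messages stable'' into the stated bound. Once $t$ exceeds the maximum level $L=\max_i\ell(a_i)$, i.e.\ the length of the longest directed path in the DAG, no outgoing message can still be changing, so in particular the messages every agent \emph{receives} are frozen, which is exactly the claim. It remains to show $L\le dia(o)$, and here I would invoke the construction of $o$ rather than mere acyclicity: edges point toward the minimum-eccentricity sink $a_s$, so along any directed path the shortest-path distance $d(a_s,\cdot)$ is non-increasing and the path is confined to the ball of radius $dia(o)=\max_{a_j}d(a_s,a_j)$ about $a_s$. The main obstacle is precisely this combinatorial bound: a directed path may traverse several consecutive agents that are \emph{equidistant} from $a_s$, since ties between such neighbors are broken arbitrarily in Algorithm~\ref{Alg:MiniSO}, so one must rule out that these ties inflate the longest directed path beyond $dia(o)$. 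This is the single place where the minimum-diameter property is actually exploited, and it is also where the off-by-one in the iteration count (my induction gives stability after $L+1$ synchronous rounds) has to be reconciled with the $dia(o)$ stated in the lemma.
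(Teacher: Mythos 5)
Your plan is essentially the paper's own proof, carried out more carefully: the paper argues by the same layered induction (predecessor-free agents emit constant messages from iteration 1, their immediate successors from iteration 2, and ``by induction'' everything is constant after $dia(o)$ iterations), and your level function $\ell(a_i)$ is just the precise form of that induction. Note that the off-by-one you worry about disappears because the lemma is about the messages an agent \emph{receives}: those come from predecessors of level at most $\ell(a_i)-1$, which by your invariant are final after $L=\max_i\ell(a_i)$ rounds, not $L+1$.

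The one step you flag --- showing $L\le dia(o)$ --- is a genuine gap, and it is not closed in the paper either; the paper simply asserts the induction reaches the sink within $dia(o)$ iterations. Your concern is well founded: Algorithm~\ref{Alg:MiniSO} orients edges between agents equidistant from $a_s$ arbitrarily, so a directed path may pass through several consecutive equidistant agents and exceed the sink's eccentricity $dia(o)$ (a triangle of agents all at distance $1$ from $a_s$ already gives a directed path of length $2$ when $dia(o)=1$, and inconsistent tie-breaking on such a triangle can even create a directed cycle, so acyclicity itself requires a consistent tie-break such as by agent index). The bound does hold in the setting the paper actually targets: grid road networks are bipartite, adjacent intersections never tie in distance to $a_s$, every edge strictly decreases $d(a_s,\cdot)$, and hence $L\le dia(o)$. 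In general, the honest statement of the lemma would replace $dia(o)$ by the length of the longest directed path of the constructed DAG, or the construction would have to be augmented with a tie-breaking rule that provably does not lengthen directed paths. You have therefore not missed an idea the paper has; you have located the step the paper hand-waves.
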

At the iteration $t$, the content of messages produced by agents is dependent only on the content of messages they received last iteration $t-1$ from previous agents. Starting from the 1st iteration, the boundary agents (that have no previous agents) will always send the constant messages. Starting from the 2nd iteration, the agents that are followed immediately by the boundary agents will always send the constant messages. By induction, after the $dia(o)$ iterations, the sink agent will receive the constant messages.

\begin{lemma}
	The message passing procedure \textsc{Message-Passing} converges the fastest on the DAG generated by Algorithm \ref{Alg:MiniSO}.
\end{lemma}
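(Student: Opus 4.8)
The plan is to reduce the statement to a purely graph-theoretic optimality claim about the diameter of the message-passing order, and then invoke the convergence characterization already established in Lemma~\ref{Lemma:Convergence}. By Lemma~\ref{Lemma:Convergence}, for any message-passing order $o$ the procedure \textsc{Message-Passing} converges after exactly $dia(o)$ iterations; hence ``converges the fastest'' is equivalent to the statement that the order produced by Algorithm~\ref{Alg:MiniSO} minimizes $dia(o)$ over all admissible single-sink orders. Thus the whole argument hinges on showing that $dia(o)$ is minimized by Algorithm~\ref{Alg:MiniSO}.

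First I would make precise what $dia(o)$ measures. In a single-sink DAG, a message originating at an agent $a_j$ can influence the sink $a_s$ only after it has traversed a directed path from $a_j$ to $a_s$, so by the inductive propagation argument of Lemma~\ref{Lemma:Convergence} the sink's incoming messages stabilize only once information from the \emph{directionally farthest} agent has arrived. Writing $d_o(a_j,a_s)$ for the directed distance to the sink along the orientation $o$, I would therefore identify $dia(o)=\max_{a_j\in\mathcal{A}} d_o(a_j,a_s)$, i.e. the length of the longest directed path terminating at the sink.

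The core of the proof is a lower bound together with a matching upper bound. For the lower bound, observe that any directed path from $a_j$ to the sink is also a walk in the undirected coordination graph, so $d_o(a_j,a_s)\geq d(a_j,a_s)$ for every $a_j$; consequently $dia(o)\geq \max_{a_j} d(a_j,a_s)=dia(a_s)\geq \min_{a_i\in\mathcal{A}} dia(a_i)$, the radius of the coordination graph. For the matching upper bound I would verify that Algorithm~\ref{Alg:MiniSO} attains this radius exactly: the sink-selection loop (steps~2--5) records the agent $a_s$ of minimum eccentricity $dia(a_s)=\min_i dia(a_i)$, and the orientation rule (steps~6--10) directs every edge from the endpoint farther from $a_s$ to the one closer to $a_s$, so no directed shortcut shorter than the undirected shortest path is created and $d_o(a_j,a_s)=d(a_j,a_s)$ for every $a_j$. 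Hence the DAG produced satisfies $dia(o)=dia(a_s)=\min_i dia(a_i)$, meeting the lower bound with equality.

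The step I expect to be the main obstacle is the lower bound, specifically justifying that $d_o(a_j,a_s)\geq d(a_j,a_s)$ cannot be circumvented and that convergence genuinely requires information from the farthest agent to reach the sink. I would close this by combining the inductive argument of Lemma~\ref{Lemma:Convergence} (messages from an agent at directed distance $k$ cannot affect the sink before iteration $k$) with the observation that orienting an edge never shortens an undirected path. Putting the two bounds together gives $dia(o)=\min_i dia(a_i)$ for the order of Algorithm~\ref{Alg:MiniSO} and $dia(o')\geq \min_i dia(a_i)$ for every other admissible single-sink order $o'$, which by Lemma~\ref{Lemma:Convergence} is exactly the claim that Algorithm~\ref{Alg:MiniSO} yields the fastest convergence.
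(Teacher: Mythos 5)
Your proof is correct and follows essentially the same route as the paper's: reduce the claim via Lemma~\ref{Lemma:Convergence} to the assertion that Algorithm~\ref{Alg:MiniSO} minimizes $dia(o)$, and then establish that minimality. The paper merely asserts the minimum-diameter property of the constructed DAG, whereas you supply the radius lower bound ($dia(o')\geq\min_i dia(a_i)$ for any single-sink order) and the matching upper bound attained by the center agent, so your version fills in the step the paper leaves implicit.
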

Derived from the Lemma \ref{Lemma:Convergence} that \textsc{Message-Passing} procedure converges within $dia(o)$ iterations. Since Algorithm \ref{Alg:MiniSO} generates a DAG with the minimum diameter, we can conclude that \textsc{Message-Passing} procedure will converge with the minimum iterations.

\subsubsection{Stability Analysis}
In this section, we show the proposed NL-Coor mechanism stabilizes the network. Stability is an important metric to evaluate the TSC policy in transportation domain, which is defined as follows.
\begin{definition}
	(Network stability). The network state process $Q(t)=\{q(l,h)(t)\}$ is stable in the mean (and $\vec{x}$ is a stabilizing TSC policy) if for some $K < \infty$
	\begin{align}
	\frac{1}{T}\sum_{t=1}^{T}\sum_{l,h}\mathbb{E}[q(l,h)(t)]<K, \forall T.
	\end{align}
	where $\mathbb{E}$ denotes the expectation.
\end{definition}
Varaiya \cite{VARAIYA2013177} has proposed the \emph{MaxPressure} TSC policy can guarantee the network stability.
\begin{definition}
	MaxPressure \cite{VARAIYA2013177}. Given the network state $Q(t)$ at current period $t$, each agent/intersection selects the action with the maximum pressure: $\vec{x}^{*}(t)=\arg\max_{\vec{x}(t)}\sum_{(l,h):x(l,h)=1}f(l,h)[q(l,h)-\sum_{p\in Do_{h}}r(h,p)q(h,p)]$, where $q(l,h)$ is the upstream queue length $\sum_{p\in Do_{h}}r(h,p)q(h,p)$ is the (average) downstream queue length, the pressure $q(l,h)-\sum_{p\in Do_{h}}r(h,p)q(h,p)$ of a movement phase is simply the difference between upstream and downstream queue lengths.
\end{definition}

\begin{figure}[!t]
	\centering
	\begin{tikzpicture}[xscale=0.8,yscale=0.8]
		\begin{axis}[
			xlabel=Time $t$,
			ylabel=The Balance index $B(t)$,
			every axis plot/.append style={ultra thick},
			ymajorgrids, major grid style={dashed, line width=.5pt,draw=gray!50}
			]
			\addplot+ table[black, mark=none,x=time, y=dcop_evaluation] {hangshou_4x4_evaluation.dat};
			\addplot+ table[mark=none, x=time, y=mp_evaluation,dotted] {hangshou_4x4_evaluation.dat};
			\legend{NL-Coor (ours), MaxPressure}
		\end{axis}
	\end{tikzpicture}
	\caption{The balance index between our NL-Coor and MaxPressure approaches.} \label{fig:Stability}
\end{figure}
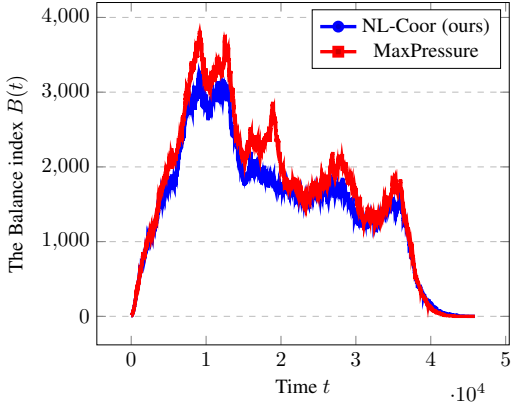

\begin{figure*}[!t]
	\begin{subfigure}[t]{0.33\textwidth}
		\centering
		\includegraphics[width=1\linewidth]{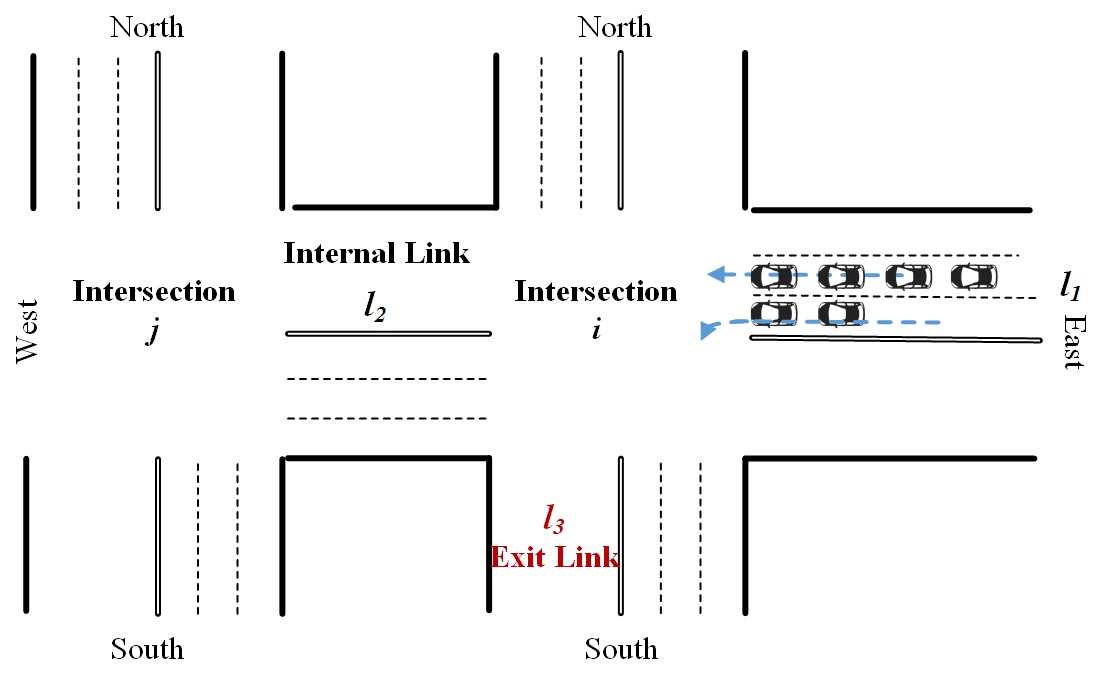}
		\caption{A road network with two intersections.}
	\end{subfigure}
	\begin{subfigure}[t]{0.33\textwidth}
		\centering
		\includegraphics[width=1\linewidth]{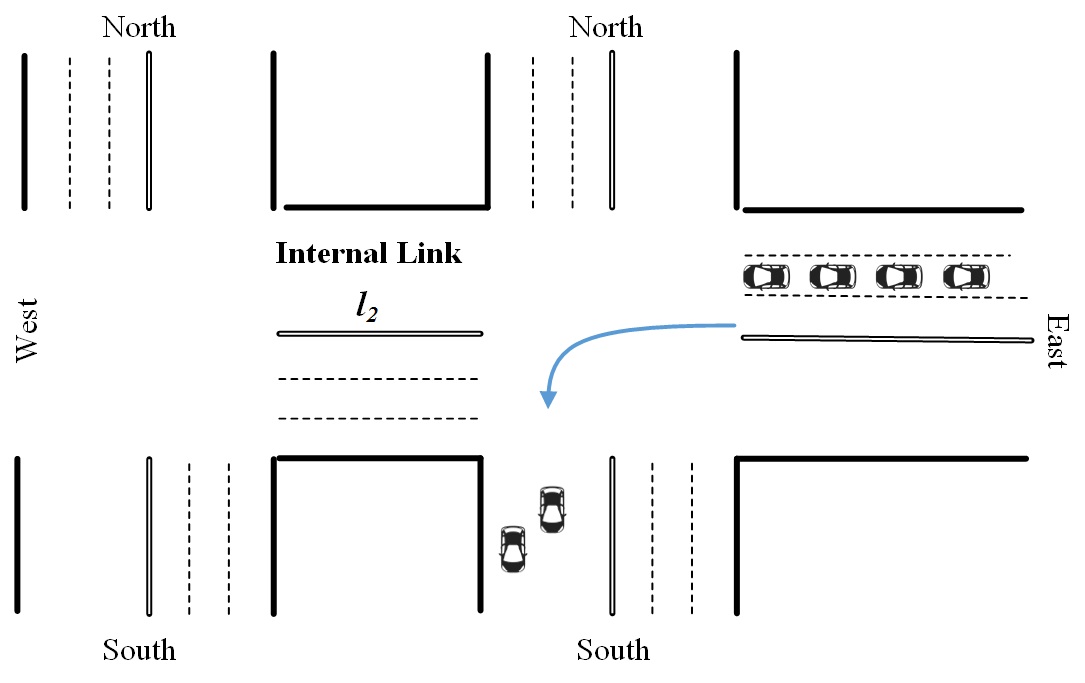}
		\caption{Network-Level Coordinated Policy}
	\end{subfigure}
	\begin{subfigure}[t]{0.33\textwidth}
		\centering
		\includegraphics[width=1\linewidth]{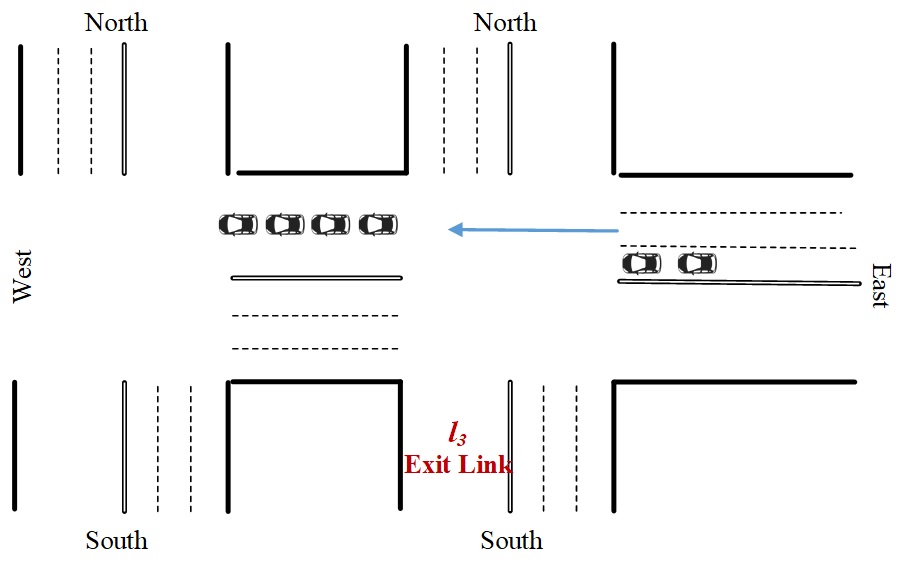}
		\caption{Greedy Individual Policy}
	\end{subfigure}
	\caption{The drawback of NL-Coor and the advantage of individual action on reducing travel time of vehicles.}
	\label{fig:GreedyMotivation}
\end{figure*}

The main idea of MaxPressure policy \cite{VARAIYA2013177} of stability proof is to guarantee the sufficient condition 	
\begin{align} \label{Eq:stability}
\mathbb{E}[B(t+1)]-\mathbb{E}[B(t)]\leq K-\epsilon \mathbb{E}|Q(t)|.
\end{align}
where $|Q(t)|=\sum_{(l,h)}q(l.h)$ denote the sum queue length of road network at the period $t$, $\epsilon >0$ and $K< \infty$.

In the following theorem, we prove that our NL-Coor can stabilize the road network by satisfying such as sufficient condition Ineq.(\ref{Eq:stability}).
\begin{theorem}
	The policy $\vec{x}^{NL-Coor}$ selected by our network-level coordination algorithm (i.e., Algorithm \ref{Alg:Max-Sum-ADE}) can stabilize the network queues.
\end{theorem}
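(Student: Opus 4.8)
The plan is to treat the balance index $B(t)$ as a quadratic Lyapunov function and show that $\vec{x}^{NL-Coor}$ drives the one-step Lyapunov drift to satisfy the sufficient condition Ineq.(\ref{Eq:stability}), after which mean stability follows by a standard telescoping argument. The pivotal observation is that, by the equivalence lemma (Lemma 1), the system cost minimized by the coordination graph equals $B(t+1)$ exactly, and Algorithm \ref{Alg:Max-Sum-ADE} returns the joint action attaining that minimum. Hence $\vec{x}^{NL-Coor}$ is by construction the policy realizing $\arg\min_{\vec{x}(t)} B(t+1)$ of Eq.(\ref{Eq:objective}), which is precisely the object I will exploit in the drift comparison.

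First I would write the drift explicitly. Using the queue updates Eq.(\ref{Eq:Evo})--(\ref{Eq:Evo1}) and expanding $B(t+1)-B(t)=\sum_{(l,h)}[q(l,h)(t+1)^2-q(l,h)(t)^2]$, the squared-difference terms are bounded by a constant (per-period flows are capped by the saturation flows and exogenous arrivals), while the cross term reorganizes, for each served movement, into its service multiplied by its pressure $q(l,h)-\sum_{p\in Do_{h}}r(h,p)q(h,p)$. This is exactly the identity underlying Varaiya's MaxPressure analysis, so I would invoke it to conclude that under MaxPressure the conditional expected drift satisfies $\mathbb{E}[B^{MP}(t+1)-B(t)\mid Q(t)]\le K-\epsilon\,\mathbb{E}|Q(t)|$ for arrival rates inside the stability region.

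The key step is the drift comparison. Because $\vec{x}^{NL-Coor}$ minimizes $B(t+1)$ over all feasible joint actions, and the MaxPressure action is one such feasible action, we have $B^{NL-Coor}(t+1)\le B^{MP}(t+1)$ (on every realization, or in conditional expectation when the optimization is run against the predicted parameters $r,f,d$). Since $B(t)$ is common to both policies, subtracting it and taking the conditional expectation gives $\mathbb{E}[B^{NL-Coor}(t+1)-B(t)\mid Q(t)]\le\mathbb{E}[B^{MP}(t+1)-B(t)\mid Q(t)]\le K-\epsilon\,\mathbb{E}|Q(t)|$, so NL-Coor itself obeys Ineq.(\ref{Eq:stability}). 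I would then telescope: summing over $t=1,\dots,T$, taking total expectation, and using $B\ge 0$ yields $\frac{1}{T}\sum_{t=1}^{T}\mathbb{E}|Q(t)|\le \frac{K}{\epsilon}+\frac{\mathbb{E}[B(1)]}{\epsilon T}<\infty$, which is exactly mean stability.

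I expect the main obstacle to be the drift comparison itself, i.e.\ justifying $B^{NL-Coor}(t+1)\le B^{MP}(t+1)$ in the correct probabilistic sense. MaxPressure minimizes only an \emph{upper bound} on the drift (the pressure-weighted service term), whereas NL-Coor minimizes $B(t+1)$ directly; one must verify that minimizing the true next-period balance dominates minimizing its pressure surrogate, and that the optimization against predicted parameters aligns with the conditional expectation $\mathbb{E}[\cdot\mid Q(t)]$ appearing in Varaiya's condition. A secondary caveat is that this domination presumes the message passing attains the coordination-graph optimum; if only the anytime (possibly suboptimal) solution is returned, one instead needs the weaker guarantee that the returned action is never worse than the MaxPressure action in order to preserve the drift bound.
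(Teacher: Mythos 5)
Your proposal follows essentially the same route as the paper's proof: compare the one-step drift of NL-Coor against MaxPressure via $\mathbb{E}[B^{NL-Coor}(t+1)]\leq \mathbb{E}[B^{MP}(t+1)]$ (because NL-Coor targets $\arg\min_{\vec{x}}B(t+1)$ while MaxPressure is just one feasible competitor), then invoke Varaiya's sufficient condition $\mathbb{E}[B(t+1)]-\mathbb{E}[B(t)]\leq K-\epsilon\,\mathbb{E}|Q(t)|$ and telescope. The caveat you flag at the end --- that the anytime message passing may return only an approximate optimum, so domination over MaxPressure is not rigorously guaranteed --- is exactly the soft spot in the paper's own argument, which it bridges only by asserting that Max-Sum\_ADVP is exact on acyclic graphs and ``adequately approximate'' on cyclic ones (written as $\approx$) together with the empirical comparison in Figure \ref{fig:Stability}.
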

\begin{proof}
	Given the network state $Q(t)$ at the period $t$, let $\vec{x}^{MP}$ denote the signal control policy selected by MaxPressure and $\vec{x}^{Opt}$ denote the optimal policy that minimizes the sum of squares of all the queue lengths at the next period $t+1$, i.e., $\vec{x}^{Opt}=\arg\min_{\vec{x}}\mathbb{E}B(t+1)$. Let $\mathbb{E}[B^{Opt}(t)]$, $\mathbb{E}[B^{NL-Coor}(t+1)]$ and $\mathbb{E}[B^{MP}(t+1)]$ denote the sum of squares of all the queue lengths returned by the optimum, our NL-Coor and the MaxPressure algorithms. On the one hand, we have that $\mathbb{E}[B^{Opt}(t+1)] \leq  \mathbb{E}[B^{MP}(t+1)]$. On the other hand, in acyclic graphs, NL-Coor algorithm can achieve the optimum, and in generalized cyclic graphs, NL-Coor algorithm is an adequately approximation of the optimum \cite{ZivanP12,ChenDW17}, i.e., $\mathbb{E}[B^{NL-Coor}(t+1)] \approx \mathbb{E}[B^{Opt}(t+1)]\leq \mathbb{E}[B^{MP}(t+1)]$. In Figure \ref{fig:Stability}, by empirical evaluation on real datasets, we further validate that  $\mathbb{E}[B^{NL-Coor}(t+1)]\leq \mathbb{E}[B^{MP}(t+1)]$. Finally, we can derive that
\begin{align} \notag
	&\mathbb{E}[B^{NL-Coor}(t+1)]-\mathbb{E}[B(t)] \\
	&\leq \mathbb{E}[B^{MP}(t+1)]-\mathbb{E}[B(t)] \leq K-\epsilon \mathbb{E}|Q(t)|
\end{align}
which can guarantee our NL-Coor algorithm stabilizes the network queues.
\end{proof}

\begin{algorithm}[!t]
	\SetKwInOut{Input}{Input}
	\SetKwInOut{Output}{Output}
	\Input{The coordination graph $G$, the network-level coordination policy $\vec{x}^{NL-Coor}$.}
	\Output{The individual action $\vec{x}=\{x_1,x_2,\cdots,x_n\}$.}
	Initialize the individual action as $\vec{x}(0)=\vec{x}^{NL-Coor}$; \\
	\While{time budget is remained}
	{
		\For{each message passing iteration $\tau$}
		{
			\For{$a_i\in \mathcal{A}$}
			{
				send current action message $x_{i}(\tau)$ to neighbors $Neg(i)$; \\
				$\vec{x}_{Neg(i)}(\tau)$ $\leftarrow$ action messages collected from $Neg(i)$; \\
				Select the optimal action $x_i(\tau+1)=\arg\min_{x_i\in D_{i},\vec{x}_{Neg(i)}(\tau)}B_{i}$.
			}
		}	
	}
	\caption{Local Individual Action Improvement(\textbf{Loc-IAI}($\vec{x}^{NL-Coor}$))}
	\label{Alg:DSA-Greedy}
\end{algorithm}
\subsection{Local Greedy Action of Improving $B_{i}$} \label{Sec:IndividualOPT}
As shown in Section 5.1, the main object of NL-Coor is to minimize the network balance $B$. From the network-level perspective, these vehicles traveling between internal links do not reduce network balance, the proposed NL-Coor might struggle to drive vehicles to the exit links for network balance minimization. However, such "clean" policy might increase the travel time of other vehicles queuing at the internal links. We take an example to illustrate this scenario.

\textbf{Example 1.} \emph{Fig.\ref{fig:GreedyMotivation}(a) shows a simple road network including two intersections $i$ and $j$, an entry link $l_1$, an internal link $l_2$ and an exit link $l_3$. Assume that there are four vehicles queuing at the movement phase $(l_1,l_2)$ and two vehicles queuing at the movement phase $(l_1,l_3)$. From the network-level perspective of minimizing the network balance, NL-Coor will activate the WE-Left movement phase of the intersection $i$ such that these vehicles on $(l_1,l_3)$ can exit the network, and the vehicles on $(l_1,l_2)$ are still waiting for next periods, where the network queue balance is $4^2=16$ (i.e., Fig.\ref{fig:GreedyMotivation}(b)). Otherwise, if the WE-Straight movement phase is activated, these vehicles queuing at the movement phase $(l_1,l_2)$ will move to internal link $l_2$ and the vehicles on $(l_1,l_3)$ are stilling waiting at the the entry link $l_1$ (i.e., Fig.\ref{fig:GreedyMotivation}(c)), where the network queue balance is $4^2+2^2=20$. However, we can find that activate the WE-Straight movement phase will reduce the average travel time of vehicles since more vehicles (i.e., these vehicles queuing at the movement phase $(l_1,l_2)$) will get closer to their destinations.}

To avoid this "clean" policy caused by NL-Coor mechanism, we propose a local action improvement procedure from the perspective of each individual agent. This local action improvement mechanism is also based on message passing, but is associated only with the action information. After collecting the neighbors' actions, each agent $a_i$ greedily selects the individual action with the aim of minimizing its own balance $B_{i}=\sum_{(l,h):l\in I(i),h\in O(i)}[q(l,h)]^2$.

Algorithm \ref{Alg:DSA-Greedy} formally describes the local individual action improvement (Loc-IAI) procedure. In steps 5-6, each agents $a_i$ sends the action information to and collect the action information from neighbors $Neg(i)$ (The neighbor action information can be achieved either in a synchronization or asynchronous manner \cite{ZhangWXW05}). Given the current action vector of neighbors $\vec{x}_{Neg(i)}(\tau)$ of the current message passing iteration $\tau$, each agent $a_i$ selects the action $x_{i}(\tau+1)$ that can minimize its balance $B_{i}$ in a best-response manner (step 7). The message passing-based local individual agent improvement procedure can stop at any time if the time limit is reached.

\begin{algorithm}[!t]
	\SetKwInOut{Input}{Input}
	\SetKwInOut{Output}{Output}
	\Input{The coordination graph $CG$,real-time budget $B^{T}$, the parameter $\epsilon$.}
	\Output{The action vector $\vec{x}=\{x_1,x_2,\cdots,x_n\}$.}
	$\vec{x}^{NL-Coor} \leftarrow$ NL-Coor($o$,$dia(o)$,$\epsilon B^{T}$); \\
	$\vec{x}\leftarrow $ Loc-IA($\vec{x}^{NL-Coor}$,1-$\epsilon B^{T}$);
	\caption{Explicitly Multiagent Coordination Algorithm (EMC)}
	\label{Alg:EMC}
\end{algorithm}

\subsection{Explicitly Multiagent Coordination Algorithm}
So far, we have discussed the two phases of network-level coordination procedure (i.e, NL-Coor) and local action improvement procedure (i.e., Loc-IAI). In this section we integrate these two phases to propose the explicitly multiagent coordination algorithm (EMC) with time budget limits. Given the real-time budget $B^{T}$ (e.g., 3s), in the first coordination phase, NL-Coor executes with respect to the time budget $\epsilon B^{T}$, where $\epsilon \in [0,1] $ is the budget allocation parameter, and the remained $1-\epsilon B^{T}$ time budget is allocated for the second individual action improvement procedure Loc-IAI. The EMC is formally described in Algorithm \ref{Alg:EMC}.

\section{Experimental Evaluation}
\subsection{Experiment Setup}

We evaluate the proposed TSC methods on the Cityflow simulation platform \cite{Zhang21} within real-world and synthetic traffic networks. We estimate queue update of next period by simulating the real Cityflow and use it as the model input of our EMC method. The vehicle's traveling speed is set as a usual of 10m/s. In each traffic network, each vehicle is described as $(o,t,d)$, where $o$ is the origin location (i.e., link), $t$ is time, and $d$ is destination location. Locations $o$ and $d$ are both links of the road network. Unless otherwise specified, each intersection is set to be a bi-direction four-way intersection.

\textbf{Synthetic Road Network.} We generate three synthetic datasets, and their detail statistics are shown in Table \ref{Tab:syn}. These large datasets $15\times 15$ (i.e., 225 intersections) and $20\times 20$ (i.e., 400 intersections) are used to validate the effectiveness (i.e., real-time) and efficiency (i.e., average travel time of vehicles) of our EMC method. The vehicles enter the road network through the boundary intersections and follow a uniform arrival rate. The traffic flow is simulated for 1 hour (i.e., the duration is 3600s). The length of the link H-$k$-V-$l$ indicates that the length of the link in the horizon (H) direction is $k$ meters and the length of link in the vertical (V) direction is $l$ meters.

\textbf{Real-World Road Network.} Four representative real-world traffic datasets collected from four cities \cite{WeiXZZZC0ZXL19} are used to evaluate the performance of methods. Detailed descriptions on these datasets are as follows, with data statistics listed in Table \ref{Tab:real}.

\begin{table}[!t]
	\centering
	\begin{tabular}{c|c|c|c|c}
		\hline\hline
		\multirow{2}* {Dataset}  & Number of   & Arrival Rate & Duration & Length\\
		&Intersections  &(vehicles/s) & (s)  & of the Lane (m)\\
		\hline\hline
		syn\_4 $\times$ 4  & 16 & 1.76 & 3600 &H-300-V-300\\
		syn\_15$\times$ 15  & 225 & 0.80 & 3600 &H-300-V-300\\
		syn\_20$\times$ 20 & 400 & 0.77 & 3600 &H-300-V-300\\
		\hline\hline
	\end{tabular}
	\caption{Data statistics of synthetic traffic dataset.}\label{Tab:syn}
	\vspace{5pt}
\end{table}
\begin{table} [!t]
	\centering
    \begin{tabular}{c|c|c|c|c}
    	\hline\hline
    	\multirow{2}* {Dataset}  & Number of  & Arrival Rate & Duration & Length\\
    	&Intersections  &(vehicles/s) & (s)  & of the Lane (m)\\
    	\hline\hline
    	NewYork  & 1$\times$16 & 0.80 & 3600 & H-350-V-100\\
    	Manhattan & 3$\times$16 & 0.77 & 3600 & H-350-V-100\\
    	Jinan  & 4$\times$3 & 0.84 & 3600 & H-800-V-400\\
    	Hangzhou  & 4$\times$4 & 1.76 & 3600 & H-800-V-600 \\
	\hline\hline
	\end{tabular}
	\caption{Data statistics of real-world traffic dataset.}\label{Tab:real}
\end{table} 

\begin{table*}[t!]
	\centering
	\begin{tabular}{c|c||ccccccc}
		\hline\hline
		\multicolumn{2}{c||}{\multirow{2}*{\backslashbox{Method}{Dataset}}}  & \multirow{2}*{ny\_$1\times 16$} & \multirow{2}*{manhattan\_$3\times 16$} & \multirow{2}* {jinan\_$3\times4$} & \multirow{2}*{hangzhou\_$4\times 4$} &
		 \multirow{2}* {syn\_$4\times 4$}  & \multirow{2}* {syn\_$15\times 15$} & \multirow{2}* {syn\_$20\times 20$}   \\
		\multicolumn{2}{c||}{} & & & & & & &\\
		\hline \hline
		\parbox[t]{5mm}{\multirow{4}{*}{\rotatebox[origin=c]{270}{Heuristics}}}& \multirow{2}*{FixedTime}   & \multirow{2}*{$<$0.1}  & \multirow{2}*{$<$0.1} & \multirow{2}*{$<$0.1} & \multirow{2}*{$<$0.1} & \multirow{2}*{$<$0.1} & \multirow{2}*{$<$0.1} & \multirow{2}*{$<$0.1}  \\
		&\multicolumn{1}{c||}{} & & & & & & &\\
		\cline{2-9}
		&\multirow{2}*{MaxPressure}  & \multirow{2}*{$<$0.1} & \multirow{2}*{$<$0.1} & \multirow{2}*{$<$0.1} & \multirow{2}*{$<$0.1} & \multirow{2}*{$<$0.1} & \multirow{2}*{$<$0.1} & \multirow{2}*{$<$0.1} \\
		&\multicolumn{1}{c||}{} & & & & & & & \\
		\hline
		\parbox[t]{4mm}{\multirow{8}{*}{\rotatebox[origin=c]{270}{RL Methods}}} &\multirow{2}*{LIT}   & \multirow{2}*{$<$0.1} & \multirow{2}*{$<$0.1} & \multirow{2}*{$<$0.1} & \multirow{2}*{$<$0.1} & \multirow{2}*{$<$0.1} & \multirow{2}*{$<$0.1} & \multirow{2}*{$<$0.1} \\ &\multicolumn{1}{c||}{} & & & & & & & \\
		\cline{2-9}
		&\multirow{2}*{PressLight}   & \multirow{2}*{$<$0.1} & \multirow{2}*{$<$0.1} & \multirow{2}*{$<$0.1} & \multirow{2}*{$<$0.1} & \multirow{2}*{$<$0.1} & \multirow{2}*{$<$0.1} & \multirow{2}*{$<$0.1} \\ &\multicolumn{1}{c||}{} & & & & & & &\\
		\cline{2-9}
		&\multirow{2}*{MA2C}   & \multirow{2}*{$<$0.1} & \multirow{2}*{$<$0.1} & \multirow{2}*{$<$0.1} & \multirow{2}*{$<$0.1} & \multirow{2}*{$<$0.1} & \multirow{2}*{$<$0.1} & \multirow{2}*{$<$0.1}  \\ &\multicolumn{1}{c||}{} & & & & & & &\\
		\cline{2-9}
		&\multirow{2}*{FMA2C}   & \multirow{2}*{$<$0.1} & \multirow{2}*{$<$0.1} & \multirow{2}*{$<$0.1} & \multirow{2}*{$<$0.1} & \multirow{2}*{$<$0.1} & \multirow{2}*{$<$0.1} & \multirow{2}*{$<$0.1}  \\ &\multicolumn{1}{c||}{} & & & & & & &\\
		\hline
		\multicolumn{2}{c||}{\multirow{2}*{EMC (ours)}}  & \multirow{2}*{0.174} & \multirow{2}*{0.268} & \multirow{2}*{0.155}& \multirow{2}*{0.165} & \multirow{2}*{0.169} &  \multirow{2}*{1.2765} & \multirow{2}*{2.013}   \\
		\multicolumn{2}{c||}{} & & & & & & &\\
		
		\hline \hline
	\end{tabular}
	\caption{Performance comparison between methods on computation time (seconds). Each cell is statistically significant at 95\% confidence level.}\label{Tab:avgRes}
\end{table*} 

\begin{table*}[t!]
	\centering
	\begin{tabular}{c|c||ccccccc}
		\hline\hline
		\multicolumn{2}{c||}{\multirow{4}{*}{\backslashbox{Method}{Dataset}}}  & \multicolumn{2}{c|}{\multirow{2}{*}{Short Link}} & \multicolumn{2}{c|}{\multirow{2}{*}{Long Link}} & \multicolumn{3}{c}{\multirow{2}{*}{Medium Link}}
		\\
		\multicolumn{2}{c||}{} &\multicolumn{2}{c|}{} & \multicolumn{2}{c|}{} & \multicolumn{3}{c}{} \\
		\cline{3-9}
		\multicolumn{2}{c||}{}  &\multirow{2}{*}{ny\_$1\times 16$ } & \multicolumn{1}{c|}{\multirow{2}{*}{manhattan\_$3\times 16$}} & \multirow{2}{*}{jinan\_$3\times 4$} & \multicolumn{1}{c|}{\multirow{2}{*}{hangzhou\_$4\times 4$}}  & \multirow{2}{*}{syn\_$4 \times 4$}  & \multirow{2}{*}{syn\_$15 \times 15$} & \multirow{2}{*} {syn\_$20\times 20$}  \\
		\multicolumn{2}{c||}{} & &\multicolumn{1}{c|}{} & &\multicolumn{1}{c|}{} & & & \\
		\hline\hline
		\parbox[t]{5mm}{\multirow{4}{*}{\rotatebox[origin=c]{270}{Heuristics}}}  & \multirow{2}*{FixedTime} & \multirow{2}*{199.5($\pm$0.1)} & \multirow{2}*{292.3($\pm$0.7)} & \multirow{2}*{331.2($\pm$1.1)} & \multirow{2}*{377.5($\pm$1.4)}  & \multirow{2}*{232.2($\pm$1.2)} & \multirow{2}*{207.1($\pm$1.0)} & \multirow{2}*{331.9($\pm$1.5)}  \\
		&\multicolumn{1}{c||}{} & & & & & & &\\
		\cline{2-9}
		&\multirow{2}*{MaxPressure}   & \multirow{2}*{\textbf{102.8($\pm$0.5)}} & \multirow{2}*{\textbf{180.4($\pm$0.9)}} & \multirow{2}*{342.4($\pm$1.3)} & \multirow{2}*{416.7($\pm$1.5)}  & \multirow{2}*{196.6($\pm$0.7)} & \multirow{2}*{184.5($\pm$0.6)} & \multirow{2}*{269.2($\pm$1.0)} \\
		&\multicolumn{1}{c||}{} & & & & & & & \\
		\hline
		\parbox[t]{4mm}{\multirow{8}{*}{\rotatebox[origin=c]{270}{RL Methods}}} &\multirow{2}*{LIT}   & \multirow{2}*{103.1($\pm$0.1)} & \multirow{2}*{295.8($\pm$0.1)} & \multirow{2}*{316.4($\pm$0.1)} & \multirow{2}*{362.2($\pm$0.2)} & \multirow{2}*{184.6($\pm$0.1)}  & \multirow{2}*{203.59($\pm$0.2)} & \multirow{2}*{302.0($\pm$0.2)} \\
		&\multicolumn{1}{c||}{} & & & & & & & \\
		\cline{2-9}
		& \multirow{2}*{PressLight}   & \multirow{2}*{106.2($\pm$0.1)} & \multirow{2}*{182.4($\pm$0.8)} & \multirow{2}*{301.3($\pm$8.9)}
		& \multirow{2}*{365.0($\pm$11.4)}  & \multirow{2}*{184.8($\pm$3.5)} & \multirow{2}*{206.6($\pm$3.98)} & \multirow{2}*{345.5($\pm$0.8)}  \\
		&\multicolumn{1}{c||}{} & & & & & & &\\
		\cline{2-9}
		& \multirow{2}*{MA2C}  & \multirow{2}*{115.4($\pm$1.3)} & \multirow{2}*{244.1($\pm$0.5)} & \multirow{2}*{306.4($\pm$0.3)} & \multirow{2}*{364.1($\pm$1.8)}  & \multirow{2}*{205.02($\pm$0.5)} & \multirow{2}*{244.8($\pm$0.7)} & \multirow{2}*{330.8($\pm$2.7)} 	 \\
		&\multicolumn{1}{c||}{} & & & & & & &\\
		\cline{2-9}
		&\multirow{2}*{FMA2C} & \multirow{2}*{120.5($\pm$2.5)} & \multirow{2}*{241.2($\pm$1.0)} &	\multirow{2}*{304.6($\pm$0.9)} & \multirow{2}*{361.1($\pm$1.3)}  & \multirow{2}*{199.6($\pm$0.8)} & \multirow{2}*{241.2($\pm$0.5)} & \multirow{2}*{315.6($\pm$3.0)} 	\\
		&\multicolumn{1}{c||}{} & & & & & & &\\
		\hline
		\multicolumn{2}{c||}{\multirow{2}*{EMC (ours)}}  & \multirow{2}*{104.3($\pm$0.1)} & \multirow{2}*{182.5($\pm$1.4)} &  \multirow{2}{*}{ \textbf{290.9($\pm$0.1)}} & \multirow{2}*{\textbf{355.1($\pm$0.8)}}  & \multirow{2}*{\textbf{180.3($\pm$0.2)}} & \multirow{2}*{\textbf{178.6($\pm$0.1)}} & \multirow{2}*{\textbf{251.6($\pm$0.1)}}  \\
		\multicolumn{2}{c||}{} & & & & & & &\\
		\hline\hline
	\end{tabular}
	\caption{Performance comparison between methods on average travel time (the lower the better). The short, medium, and long link indicates the length of the link is short, medium and long.}\label{Tab:avgExpTime}
\end{table*} 

\begin{itemize}
\item \textbf{ny\_$1\times 16$}: Flow data of 16 intersections from 8-th Avenue in New york City with uni-directional traffic on both the arterial and the side streets.
\item \textbf{manhattan\_$3\times 16$}: Flow data generated from open-source taxi trip data of 48 intersections in the road network of Manhattan, New York City.
\item \textbf{jinan\_$3\times 4$}: Flow data collected by roadside cameras from 12 intersections in Dongfeng Sub-district, Jinan, China.
\item \textbf{hangzhou\_$4\times 4$}: Flow data of 16 intersections in Gudang Sub-district generated from roadside surveillance cameras.
\end{itemize}

\textbf{Comparison Methods.} We compare our EMC method with the following two categories of methods: heuristic methods and RL methods. Heuristic methods mainly design TSC by expert domain knowledge, while RL methods mainly learn TSC by interacting with traffic environment.

\noindent \emph{\textbf{Heuristic Methods:}}
\begin{itemize}
	\item \textbf{FixedTime }\cite{Lowrie1990SCATSSC,Robertson91}: The traffic light cycles through a fixed sequence of phases. The signal sequence is pre-determined by expert rule. Each phase has a fixed duration of 10 seconds. In this paper, there are four phases, i.e., WE-Straight, WE-Left, SN-Straight, SN-Left. Due to its simpleness, FixedTime method has been widely used in practice.
	\item \textbf{Maxpressure} \cite{VARAIYA2013177}: Similar to our EMC method, the Maxpressure method is also an online planning method, which observes real traffic and greedily active the phase with the maximum pressure. A brief introduction of Maxpressure in shown in Section 5.1.3.
\end{itemize}
\emph{\textbf{RL Methods:}}
\begin{itemize}
\item \textbf{PressLight} \cite{Hua2019}: PressLight method models each intersection as an independent agent and learns the policy of choosing next phase by vanilla DQN. The state is set as the queue length on links and reward is set as the negative of the pressure.
\item \textbf{LIT} \cite{abs-1905-04716}: LIT method is also an individual deep RL method which uses phase and vehicles numbers as the state, and the sum of queue length as the reward. Compared with PressLight, it does not consider the traffic condition from downstream.
\item \textbf{MA2C} \cite{Chu2020}: Multi-agent A2C is a mutil-agent RL method, which uses information of neighborhood policies to enhance the observation of each local agent.
\item \textbf{FMA2C} \cite{Ma2020}: FMA2C extends MA2C with the feudal hierarchy, where each intersection is controlled by a worker agent and a each region is controlled by a manager agent. In the two-level manager-worker framework, managers coordinate their high-level behaviors and set goals for their workers in the region, while each lower-level worker controls traffic signals
to fulfill the managerial goals.
\end{itemize}

Note that all methods are carefully tuned and their best results are reported. For a fair comparison, all the RL methods are learned without pre-trained process and the action interval is set as 10 seconds. All RL methods are trained using 500 episodes. 

\textbf{Performance Metric.} We use the computation time (in seconds) of making signal decisions \footnote{The proposed EMC method is parallelizable and hence the running time could be reduced substantially using multiple cores.} and the average travel time of vehicles to evaluate the performance of methods. The average travel time is the widely used measure in TSC literature, which is calculated as the average travel time of all vehicles spent in the road network.

All computations are performed on a 64-bit workstation with 64 GB RAM and a 16-core 3.5 GHz processor. All records are averaged over 40 instances, and each record is statistically significant at 95\% confidence level unless otherwise specified.

\begin{figure*}[!ht]
	\begin{subfigure}[t]{0.34\textwidth}
		\centering
		\includegraphics[width=1\linewidth]{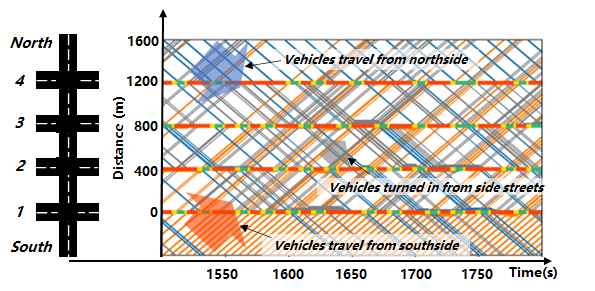}
		\caption{EMC (ours)}
	\end{subfigure}
	\begin{subfigure}[t]{0.34\textwidth}
		\centering
		\includegraphics[width=1\linewidth]{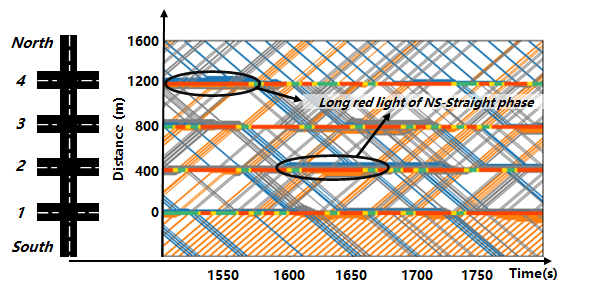}
		\caption{MaxPressure}
	\end{subfigure}
	\begin{subfigure}[t]{0.34\textwidth}
		\centering
		\includegraphics[width=1\linewidth]{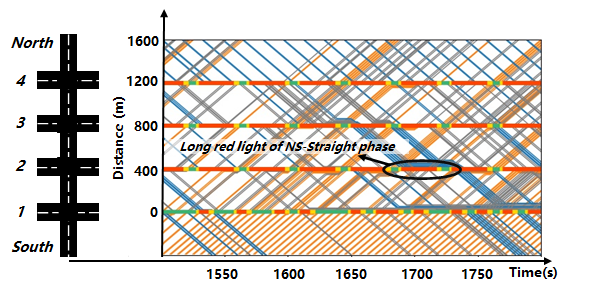}
		\caption{PressLight}
	\end{subfigure}
	\caption{Time-Space Diagram with signal timing plan of our EMC, MaxPressure and Presslight methods.}
	\label{Fig:TSD}
\end{figure*}
\subsection{Experiment Results}
\textbf{Results on Computation Time.}  Table \ref{Tab:avgRes} shows the computation time for each TSC decision phase. From Table \ref{Tab:avgRes}, we can observe traditional methods (i.e., all methods except our EMC) can select to active a movement phase within 0.1 seconds. This can be explained by the fact that 1) for MaxPressure, each intersection only need to select one of the four phases with the maximum pressure, in which the computation time is limited, and 2) for the pre-determined heuristic FixedTime and learned RL methods, each interaction only need to query the corresponding policy by using the real traffic as an input, which is also time-free. The computation time of our EMC method increases with the size of the road network. Fortunately, even in the large-scale with $20\times 20$ intersections, each intersection in EMC can decide to activate a movement phase within 3 seconds. 3 seconds is a reasonable maximum time limit for traffic signal control since there is always a yellow light (e.g., a fixed duration of 3 seconds) after each phase for traffic clean. Therefore, during the yellow light, our EMC can determine next phase in a real-time manner. Moreover, considering our EMC is an anytime algorithm that can search more action space given more time budget and can return a best solution under the existing time, it enables scaling to large-scale TSC problems with hundreds of intersections.

\textbf{Results on Travel Time.} Table \ref{Tab:avgExpTime} shows the average travel time in both real datasets and synthetic datasets, From which we have the following observations:
\begin{itemize}
	\item In synthetic dataset (i.e.,syn\_$4\times 4$, syn\_$15\times 15$ and syn\_$20\times 20$) with medium link and real dataset with long link (i.e., jinan\_$3\times 4$ and hangzhou\_$4\times 4$), compared with heuristic and RL baselines, our EMC method generates the least travel time. Especially in the large-scale synthetic network syn\_$20\time 20$, our EMC reduce ~20\% travel time compared to the best RL method LIT. The potential reason is that for complex large-scale road networks with hundreds of intersections, it is hard for RL methods to train desirable coordination solutions. This result is in consist with the scalability limitations of current multiagent RL methods. In contrast, our EMC method enables direct modeling of the flow influences between neighboring intersections and aims to optimize global traffic pressure. However, MaxPressure only optimizes local pressure of individual intersection, thus the performance decreases in complex large-scale traffic situation.
	\item  In the real datasets ny\_$1\times 16$ and manhattan\_$3 \times 16$ with short link, i.e., the length of the link in the vertical direction is only 100m, the MaxPressure method produces the least travel time. This is can be explained by the fact that a vehicle with the speed of 10m/s can pass through the short link within a decision period (i.e., 10s). Such "pass through" phenomenon will make the prediction-based methods (i.e., RL methods) and online lookahead planning method (i.e., our EMC method) inaccurate in modeling queue state at the next period state. Maxpressure only considers the current period state, which will avoid this "pass through" issue.
	\item  LIT method behaves fairly well in synthetic datasets but does not fit real data with irregular flow data like manhanttan\_$3\times 16$. Presslight shows good result in these datasets except the large-size synthetic datasets. Such divergence conforms to the deficiency in RL methods, that it is incapable of capturing communication between neighboring intersections in large-scale or complex scenarios. The performance advantage of FMA2C over MA2C corresponds to the FMA2C method's optimization effect in serving global coordination.	
\end{itemize}
In summary, for real-world road networks such that no vehicle can pass through an entire road within each period $t$ (i.e., with medium and long links), our EMC is a good option for TSC with respect to real-time responsive and average travel time minimization.

\begin{table}[!t]
	\centering
	\begin{tabular}{c|c|c|c|c}
		\hline\hline
			\multirow{2}*{\backslashbox{Delay (s)}{Dataset}}   & \multirow{2}* {3$\times$3}   & \multirow{2}* {4$\times$ 4} & \multirow{2}* {15$\times$15} & \multirow{2}* {20$\times$20} \\
			 & & & & \\
		\hline\hline
		$\mu$=0 & 0.022 & 0.026 & 0.100 &0.180\\
		  $\mu$=0.01 & 0.616 & 0.629 & 0.719 &0.792\\
		$\mu$=0.02 &1.097 & 1.109 & 1.187 & 1.230\\
		\hline\hline
	\end{tabular}
	\caption{Communication delay (second) of our EMC method.}\label{Tab:CommDelay}
	\vspace{5pt}
\end{table}

\textbf{Results on Communication Delay of our EMC Method.} Since our EMC is message passing-based decentralized method, we also test its communication delay by Pumba \footnote{https://github.com/alexei-led/pumba}, a chaos testing command line tool for Docker containers. In the simulation, we generate 10 Docker nodes, each is allocated with a number of agents/intersections. For example, in a 20$\times$20 road network, 40 agents are randomly selected and allocated to one of the non-configured nodes. Node communicate by the TCP protocol. The communication delay $X$ (ms) follows the Normal distribution $N(\mu,3^2)$ where $\mu$ is the expected delay, and 3 (ms) is the standard deviation. The node ends asynchronously, and the end time of the last node is recorded. Table \ref{Tab:CommDelay} shows the delay of our EMC method at different road networks. From Table \ref{Tab:CommDelay}, we can find that in the large-scale dataset $20\times$20, even with the higher node-to-node communication delay (i.e., 20ms), our EMC takes only 1.2s to communicate messages. This includes that in real-world large-scale 20$\times$20 scenarios, our EMC takes about 3.2s to return the signal decision, in which the 2.0s is used for computing the message (which is shown in Table \ref{Tab:avgRes}) and the remained 1.2s is used for communicating the message.

\subsection{Case Study in Real-World Jinan Road Network}
In this case, we use the time-space diagram to show the trajectory of each vehicle in the real-world road network Jinan\_4$\times$3 in the time range from 1500s to 1800s, and visualize the performance of different methods. In the time-space diagram of Figure \ref{Fig:TSD}, the x-axis is the time and the y-axis is the distance of the westernmost arterial road from south to north. We mainly show traffic flows of north (i.e., blue lines) and south (i.e., orange lines) directions. The flow arrival rate is set as 1.68/s to simulate the peak scenarios. As it is shown in Fig.\ref{Fig:TSD}, there are 4 bands with green-yellow-red colors indicating the signal lights of 4 intersections. We can find that under MaxPressure (i.e.,Fig.\ref{Fig:TSD}(b)) and PressLight (i.e., Fig.\ref{Fig:TSD}(c)), a number of vehicles might accumulate and wait before the red light. As the traffic becomes heavier, MaxPressure and PressLight have to keep the red signals for long periods to make a concession for other directions with high-demand traffic flows. In comparison, most orange and blue lines are straight under our EMC (i.e., Fig.\ref{Fig:TSD}(a)), which indicates few vehicles are waiting before the red light for long periods. This observation further validate our EMC method can detect the actual traffic change on each links and optimize the global pressure in the next period, thereby can feasibly plan short-time green light phase for incoming vehicles from north and south directions to pass through.

\section{Conclusion and Future Work}
This paper studies the TSC problem of minimizing the average travel time of vehicles. To achieve the real-time and  network-level TSC, an explicit multiagent coordination (EMC) framework is proposed, which is simple to implement. In EMC, each intersection is modeled as an autonomous agent, and the coordination efficiency is formulated as the  balance index between neighbor agents. The balance index is general that can be incorporated in a number of other TSC methods. To achieve the real-time responsiveness, an anytime Max-sum\_ADVP-based message passing mechanism is proposed to select the signal plan. The EMC algorithm mainly consists of network-level coordination and local individual action improvement phases, with the aim of minimizing the travel time of vehicles.  The network-level coordination phase is proved to guarantee network stability. Experimental results show that on both synthetic and real datasets, the proposed EMC achieves lower travel time than other benchmark heuristic and RL methods, and can scale to large-scale scenarios.

One interesting future direction is to extend the EMC from one-step lookahead to multi-step lookaheads. This extension is non-trivial since the action space increases exponentially with the number of steps. Decentralized multi-step coordination requires intolerable communication cost (i.e., message size), while centralized coordination requires intolerable computation time. We would like to exploit the interaction structure of EMC (e.g., each agent only interacts with adjacent neighbor agents), and explore the Monte-Carlo tree search (MCTS)-based sampling algorithm \cite{ChoudhuryGMK21} to search coordinated policy that can coordinate within multi-steps.

\section*{Acknowledgment}
This research is supported by the National Natural Science Foundation of China (61806053, 61932007 and 62076060), the Key Research and Development Program of Jiangsu Province of China (BE2022157).

\ifCLASSOPTIONcaptionsoff
  \newpage
\fi



%
\bibliographystyle{IEEEtran}
\bibliography{TITS}



%

\begin{IEEEbiographynophoto}{Wanyuan Wang}
is an associate professor with the School of Computer Science and Engineering, Southeast University, Nanjing, China. He received his PhD. degree in computer science from Southeast University, China, 2016. He has published over 30 articles in refereed journals and conference proceedings, such as the IEEE Transactions, AAAI, and AAMAS. He won the best student paper award from ICTAI14. His main research interests include artificial intelligence, multiagent systems, and optimization.
\end{IEEEbiographynophoto}

\begin{IEEEbiographynophoto}{Tianchi Qiao}
	is currently pursuing the M.E. degree with the School of Computer Science and Engineering, Southeast University. His main research interests include multiagent systems and distributed constraint optimization.
\end{IEEEbiographynophoto}

\begin{IEEEbiographynophoto}{Jinming Ma}
	is currently pursuing the Ph.D. degree with the School of the Computer Science and Technology, University of Science and Technology of China. His research interests include reinforcement learning and multi-agent systems.
\end{IEEEbiographynophoto}

\begin{IEEEbiographynophoto}{Jiahui Jin} 
is an associate professor in the School of Computer Science and Engineering, Southeast University, Nanjing, China. He received his Ph.D. degree in computer science from Southeast University in 2015. He had been a visiting Ph.D. student at University of Massachusetts, Amherst, U.S., during August 2012 to August 2014. His current research interests include large-scale data processing and urban computing.
\end{IEEEbiographynophoto}

\begin{IEEEbiographynophoto}{Zhibin Li} 
received his Ph.D. degree from the School of Transportation at Southeast University, China, in 2014. He is currently a professor at Southeast University. From 2015 to 2017, he worked as a postdoctoral researcher at the University of Washington and the Hong Kong Polytech University. From 2010 to 2012, he was a visiting student at the University of California, Berkeley. His research interests include Intelligent Transportation, Traffic Control, etc. He received China National Scholarship twice, and won the Best Doctoral Dissertation Award by China Intelligent Transportation Systems Association in 2015. He has authored or co-authored over 80 articles in journals. 
\end{IEEEbiographynophoto}

\begin{IEEEbiographynophoto}{Weiwei Wu} 
received the B.Sc. degree from the South China University of Technology and the Ph.D. degree from the Department of Computer Science, City University of Hong Kong (CityU), and University of Science and Technology of
China (USTC) in 2011. He is currently a Professor with the School of Computer Science and Engineering, Southeast University, China. He went to Nanyang Technological University
(NTU, Mathematical Division, Singapore), for
post-doctoral research in 2012. He has published over 50 peer-reviewed papers in international conferences/journals, and serves as TPCs and reviewers for several top international
journals and conferences. His research interests include optimizations
and algorithm analysis, reinforcement learning and game theory.
\end{IEEEbiographynophoto}

\begin{IEEEbiographynophoto}{Yichuan Jiang}
	is a full professor with the School of Computer Science and Engineering, Southeast University, Nanjing, China. He received his PhD degree in computer science from Fudan University, Shanghai, China, in 2005. His main research interests include multiagent systems, social computing, and social networks. He has published more than 90 scientific articles in refereed journals and conference proceedings, such as the IEEE Transactions on Parallel and Distributed Systems, IEEE Journal on Selected Areas in Communications, IEEE Transactions on Systems, Man, and Cybernetics-Part A: Systems and Humans,  IEEE Transactions on Systems, Man, and Cybernetics-Part C: Applications and Reviews, IEEE Transactions on Systems, Man, and Cybernetics: Systems, IEEE Transactions on Cybernetics, the ACM Transactions on Autonomous and Adaptive Systems, the Journal of Autonomous Agents and Multi-Agent Systems, the Journal of Parallel and Distributed Computing, IJCAI, AAAI and AAMAS. He won the best paper award from PRIMA06 and best student paper awards twice from ICTAI13 and ICTAI14. He is a senior member of IEEE.
\end{IEEEbiographynophoto}

\vspace{-20pt}

\end{document}